\documentclass{article}

\PassOptionsToPackage{numbers,compress}{natbib}
\usepackage[main,final]{neurips_2026}

\usepackage[utf8]{inputenc} 
\usepackage[T1]{fontenc}    
\usepackage{hyperref}       
\hypersetup{hidelinks}
\usepackage{url}            
\usepackage{booktabs}       
\usepackage{amsfonts}       
\usepackage{nicefrac}       
\usepackage{microtype}      
\usepackage[table]{xcolor}  
\usepackage{multirow} 
\usepackage{graphicx} 
\usepackage{wrapfig}
\usepackage{amsmath,amssymb,amsthm}
\usepackage[table]{xcolor}
\usepackage{subcaption}

\definecolor{tablebg}{RGB}{248,249,250}
\definecolor{groupbg}{RGB}{242,243,245}
\definecolor{ourbg}{RGB}{242,247,255}

\usepackage{booktabs}
\usepackage{multirow}
\usepackage[table]{xcolor}
\usepackage{makecell}

\newtheorem{proposition}{Proposition}
\newtheorem{corollary}{Corollary}

\begin{document}

\title{UniPro: Unified Multi-Mode Medical Image Segmentation from 2D Images to 3D Volumes via Propagation}

\author{%
  Bangwei Guo\textsuperscript{1}\qquad Yunhe Gao\textsuperscript{2}\qquad Meng Ye\textsuperscript{3}\qquad Yang Zhou\textsuperscript{1} \\
  \bfseries Difei Gu\textsuperscript{1}\qquad Guoning Zhang\textsuperscript{1}\qquad Leon Axel\textsuperscript{4}\qquad Dimitris N.~Metaxas\textsuperscript{1} \\[0.5em]
  \normalfont\textsuperscript{1}Rutgers University\qquad\textsuperscript{2}Stanford University \\
  \normalfont\textsuperscript{3}The University of Texas at Arlington\qquad\textsuperscript{4}New York University
}
\maketitle

\begin{abstract}
Medical image segmentation remains fragmented along two axes: segmentation paradigms and data dimensionality. Existing methods are typically developed separately for semantic, in-context, and interactive segmentation, and are further specialized to either native 2D images or 3D volumetric data. In clinical practice, however, segmentation workflows take many forms: a case may be initialized by semantic prediction, reference-guided segmentation, or user interaction. Regardless of how it begins, fine-grained refinement is naturally performed on 2D views; for volumetric scans, such 2D edits must propagate coherently to the rest of the volume. We present \textbf{UniPro}, a \textbf{uni}fied model that bridges segmentation paradigms and data dimensionality, using \textbf{pro}pagation to extend 2D segmentation to 3D volumes. Our key insight is that volumetric propagation and in-context segmentation share the same reference-conditioned prediction mechanism, differing only in whether the reference image--mask pairs come from other cases or from previously segmented neighboring slices. Building on this view, UniPro supports semantic, in-context, interactive, and propagation-based segmentation within a single slice-based framework, using class priors, reference exemplars, user clicks, and neighboring-slice predictions as mode-specific conditioning inputs. To improve propagation reliability, UniPro further incorporates bidirectional and 3D supervision to regularize slice-wise propagation beyond per-slice losses. Extensive experiments across diverse modalities and anatomies show that UniPro achieves strong performance across all segmentation settings, enabling annotation-efficient 3D segmentation from sparse 2D initialization and reducing slice-by-slice correction effort.
\end{abstract}

\section{Introduction}

Medical image segmentation is a cornerstone of quantitative analysis and clinical decision-making. By delineating anatomical structures and pathological regions, it supports a wide range of downstream applications, including structural assessment~\citep{bilic2023liver_lits, wasserthal2023totalsegmentator}, disease diagnosis~\citep{shen2017deep, guo2026learning}, and agent-driven clinical analysis workflows~\citep{li2025co,liu2026medsam}. Recent advances in deep learning have substantially improved segmentation performance across diverse organs, imaging modalities, and datasets~\citep{hatamizadeh2022unetr, ma2024segment}. Despite this progress, medical image segmentation remains fragmented along two axes: segmentation paradigms and data dimensionality. Existing methods are typically developed separately for semantic, in-context, and interactive segmentation~\citep{butoi2023universeg, isensee2021nnu, isensee2025nninteractive, ma2024segment}, and are further specialized to either native 2D images or 3D volumetric data~\citep{du2023segvol, gao2024training, ye2023uniseg, ma2025medsam2, zhang2022bridging}. As a result, advances in individual settings have not yet led to a unified framework that naturally supports the full spectrum of medical segmentation workflows.

This fragmentation is also misaligned with real-world clinical practice. Clinicians work with both native 2D images and volumetric scans, for example reading chest X-rays and dermatology photographs in their original 2D form while reviewing CT and MRI studies slice-by-slice in 3D volumes~\citep{shen2017deep}. A case may be initialized by semantic prediction, reference-guided segmentation, or user interaction~\citep{guo2026kprism}. Regardless of the initialization source, fine-grained refinement is naturally performed on 2D views, where local boundaries and subtle anatomical details are easier to inspect and correct~\citep{ma2025medsam2}. For volumetric data, such 2D edits can be extended to the remaining slices through propagation, which
sequentially predicts segmentation masks using
initialized or previously segmented slices as
references while preserving local editability. Together, these observations suggest that medical segmentation is inherently multi-mode and cross-dimensional, and that flexible initialization combined with editable volumetric propagation can naturally bridge these dimensions.

\begin{figure*}[t]
    \centering
    \includegraphics[width=0.95\textwidth]{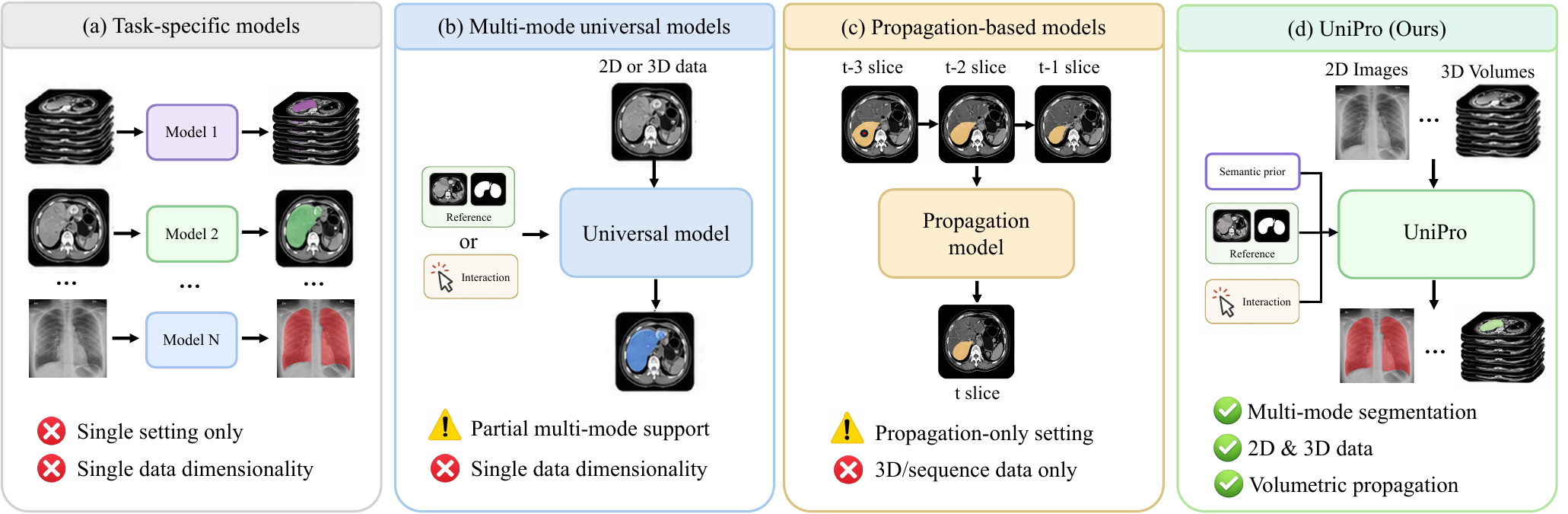}
    \caption{
    Comparison of medical image segmentation paradigms.
    Existing task-specific, universal, and propagation-based models cover only part of the design space, with limitations in segmentation modes, data dimensionality, or propagation capability.
    UniPro unifies multi-mode segmentation, native 2D and 3D data, and volumetric propagation in a single slice-based framework.
    }
    \label{fig1}
    \vspace{-1em}
\end{figure*}

To address this challenge, we propose \textbf{UniPro}\footnote{Project repository: {\urlstyle{same}\url{https://github.com/bangwayne/UniPro}}.}, a \textbf{uni}fied model for multi-mode medical image segmentation, with \textbf{pro}pagation as the bridge between 2D and 3D (Figure~\ref{fig1}). Our key insight is that volumetric propagation and in-context segmentation share the same reference-conditioned prediction mechanism: both predict a target mask conditioned on image--mask references, differing only in whether the references come from other cases or from previously segmented neighboring slices within the same volume. Within a single slice-based framework, UniPro additionally supports semantic and interactive segmentation through class priors and user clicks, thereby unifying class-level, reference-based, click-based, and neighboring-slice conditioning without separate task-specific architectures. Reliable propagation remains challenging because sequential predictions can accumulate drift along the propagation chain, and slice-wise optimization alone does not sufficiently constrain the volumetric consistency. To improve propagation reliability, UniPro incorporates bidirectional and 3D supervision to regularize slice-wise propagation beyond per-slice losses. Together, these components enable UniPro to deliver coherent 3D segmentation from sparse 2D initialization while remaining locally editable—matching the multi-mode, cross-dimensional nature of clinical workflows.

In summary, our contributions are: 
\textbf{(1)} To our knowledge, \textbf{UniPro} is the first unified slice-based framework that jointly supports semantic, in-context, interactive, and propagation-based segmentation across native 2D images and 3D volumes. 
\textbf{(2)} We formulate volumetric propagation and in-context segmentation under a shared reference-conditioned prediction mechanism, allowing propagation to reuse the same conditioning pathway over neighboring slices. 
\textbf{(3)} We incorporate bidirectional and 3D supervision to regularize slice-wise propagation beyond per-slice losses, improving reliability for volumetric medical segmentation. 
\textbf{(4)} We evaluate UniPro on 14 medical imaging datasets across diverse modalities, anatomies, and segmentation settings, demonstrating strong performance and generalization, together with annotation-efficient 3D segmentation from sparse 2D initialization.

\section{Related Work}

\subsection{Universal Medical Image Segmentation}

Recent studies have explored unified medical segmentation frameworks that support multiple segmentation paradigms within a single model. Existing approaches typically achieve partial unification, such as combining semantic and in-context segmentation~\citep{gao2025show}, semantic and interactive segmentation~\citep{guo2025verse, shen2025protosam, ding2024clustering}, or in-context and interactive segmentation~\citep{wong2024multiverseg, zhang2025atlas}. However, these frameworks are usually designed for a subset of segmentation modes and are often confined to a single data dimensionality. In contrast, UniPro aims to unify semantic, in-context, interactive, and propagation-based segmentation while supporting both native 2D images and 3D volumes through a slice-based framework.

\subsection{2D and 3D Medical Image Segmentation}

Medical image segmentation methods are commonly developed for either native 2D images or 3D volumetric scans. 2D models, such as nnU-Net~\citep{isensee2021nnu}, MedSAM~\citep{ma2024segment}, UniverSeg~\citep{butoi2023universeg}, and SAM-Med2D~\citep{cheng2023sam}, are naturally suited for native 2D modalities such as X-ray, dermoscopy, ultrasound, and endoscopy, and can also be applied to 3D volumes by processing them slice by slice. However, when used for volumetric data, independent 2D prediction does not explicitly model cross-slice continuity and often requires dense per-slice prediction or correction to obtain coherent 3D masks. In contrast, 3D models, such as UNETR~\citep{hatamizadeh2022unetr}, nnInteractive~\citep{isensee2025nninteractive}, and SAM-MED3D~\citep{wang2025sam}, capture volumetric context and inter-slice dependencies, but are typically tied to volumetric inputs, often incur higher computational and memory costs than conventional 2D slice-based models, and provide limited flexibility for localized 2D slice-level editing. UniPro bridges this gap by retaining compatibility with native 2D images and slice-level editability, while using propagation to extend sparse 2D initialization and refinements into coherent 3D predictions.

\subsection{Propagation-Based Models}

Propagation-based models segment volumes sequentially by propagating mask information from annotated or previously segmented slices to adjacent slices. Compared with full-volume 3D processing, propagation-based models retain slice-level modeling and local editability through sequential slice-wise prediction~\citep{chen2025pam}.  Correspondence- and registration-based approaches~\citep{yeung2021sli2vol,an2025sli2volplus,bitarafan2022vol2flow} extend a single annotated slice to a full volume by learning
inter-slice correspondences or displacement fields. Recent medical propagation models~\citep{ma2025medsam2, chen2025pam, bai2024revsam2, chen2025accelerating} are often inspired by video segmentation models such as SAM2~\citep{ravi2024sam}. However, they mainly operate on 3D volumes or video-like sequences and are usually designed for propagation-centric, interaction-based segmentation, rather than jointly supporting native 2D images and multiple segmentation modes. Moreover, most propagation methods lack explicit regularization for anatomical continuity in medical volumes, making sequential predictions prone to error accumulation and cross-slice drift~\citep{bai2024revsam2}. In contrast, UniPro integrates propagation into a unified multi-mode framework by treating neighboring-slice propagation as reference-conditioned prediction, and further regularizes slice-wise propagation with bidirectional and 3D supervision.

\section{Method}

\subsection{Overview}
\label{sec:overview}

Figure~\ref{fig2} illustrates the UniPro framework. UniPro is built on a shared 2D encoder--decoder backbone that supports four segmentation settings: semantic, in-context, interactive, and propagation-based segmentation through different forms of conditioning applied to the same backbone. Specifically, UniPro supports the following settings:

\begin{itemize}
    \item \textbf{Semantic segmentation}: the model predicts the mask from the input image conditioned on class-level prior queries.
    
    \item \textbf{In-context segmentation}: the model conditions on reference image--mask pairs for reference-guided prediction.
    
    \item \textbf{Interactive segmentation}: the model takes user clicks for mask prediction and iterative refinement.
    
    \item \textbf{Propagation for 3D volumes}: the model performs slice-wise propagation from a seed slice (initialized by any of the above modes), treating each query slice as in-context segmentation conditioned on previously segmented neighboring slices.

\end{itemize}

To realize these capabilities within a single framework, UniPro is trained in two stages. In Stage~1, the shared backbone is jointly optimized for semantic, in-context, and interactive segmentation, learning a unified representation under diverse conditioning forms. In Stage~2, the same backbone is further trained for volumetric propagation with propagation-specific supervision that improves slice-wise propagation reliability beyond per-slice losses. Detailed architectures and training procedures are provided in the Appendix~\ref{appendix_training}.

\begin{figure*}[t]
    \centering
    \includegraphics[width=\textwidth]{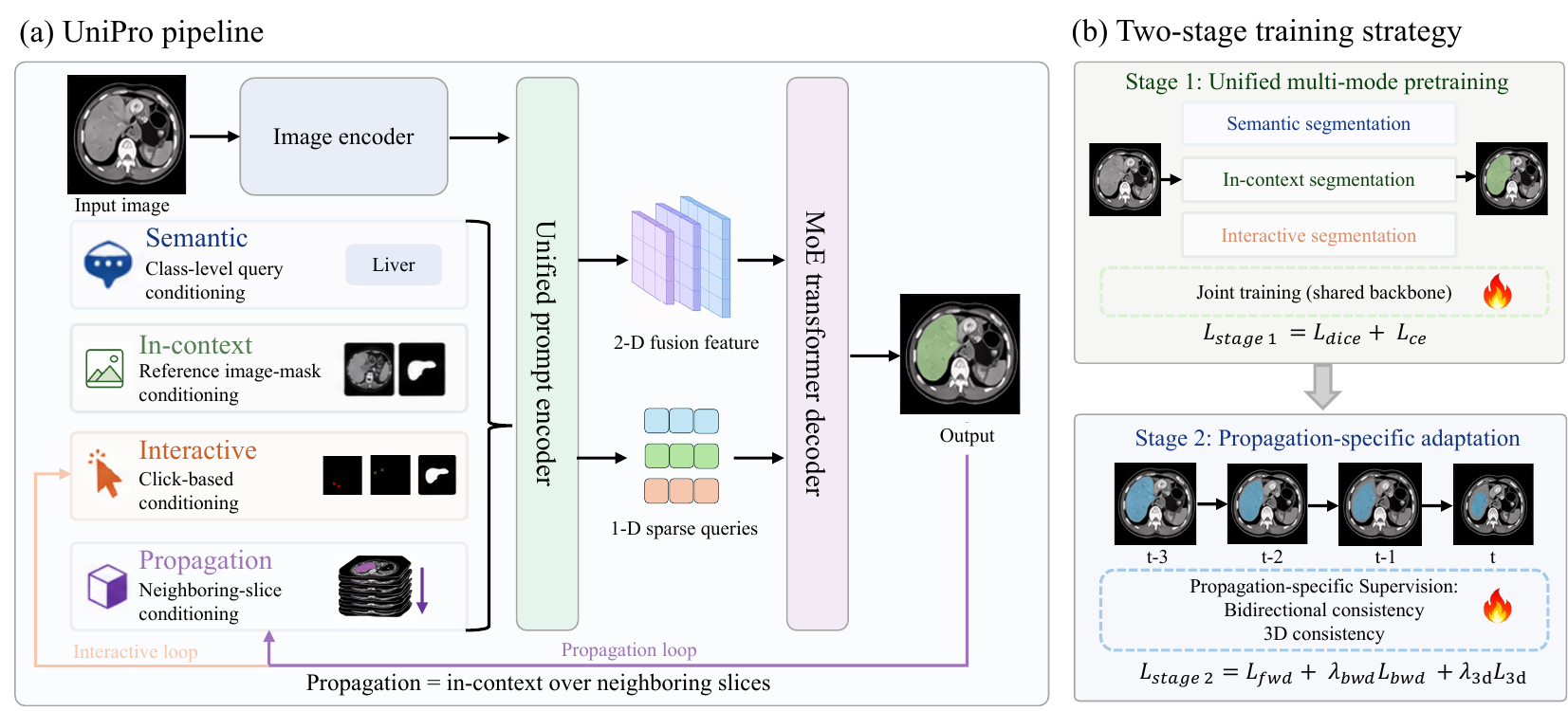}
    \caption{
    \textbf{UniPro overview.} \textbf{(a)} A unified slice-based segmentation pipeline for semantic, in-context, interactive, and propagation-based segmentation. UniPro uses a shared image encoder and decoder, with different settings instantiated through mode-specific conditioning inputs. Conditioning signals are converted into dense 2-D fusion features and sparse 1-D queries, which are jointly fed to the MoE Transformer decoder. Notably, propagation is formulated as in-context segmentation over neighboring slices, enabling coherent volumetric segmentation. \textbf{(b)} Two-stage training strategy. Stage 1 jointly pretrains semantic, in-context, and interactive segmentation with the shared backbone. Stage 2 further adapts the model to propagation with bidirectional and 3D consistency supervision.
    }
    \label{fig2}
\end{figure*}

\subsection{Propagation as Reference-Conditioned In-Context Segmentation}
\label{sec:propagation}

\paragraph{Anchor-pinned reference set construction.}
Given a 3D volume $\mathbf{V} \in \mathbb{R}^{D \times 3 \times H \times W}$ with $D$ axial slices, UniPro produces the full segmentation by bidirectionally propagating from a seed slice $z^*$, defined as the median axial index of the target organ. The accompanying seed mask $\hat{\mathbf{M}}_{z^*}$ may originate from semantic, in-context, or interactive segmentation, or from ground-truth annotation. For notational simplicity, we describe only the forward direction. At each step $q > z^*$, propagation is cast as a reference-conditioned in-context prediction problem, where the query slice $\mathbf{I}_q \in \mathbb{R}^{3 \times H \times W}$ is segmented using a dynamically maintained reference set
\begin{equation}
\mathcal{R}_q =
\{(\mathbf{I}_{z^*}, \hat{\mathbf{M}}_{z^*})\} \cup \{(\mathbf{I}_{q-j}, \hat{\mathbf{M}}_{q-j})\}_{j=1}^{r-1},
\end{equation}
with $r=5$ by default. The seed pair is permanently anchored in $\mathcal{R}_q$ and combined with the $r{-}1$ most recently propagated neighbors, providing a persistent, well-localized reference that mitigates cumulative semantic drift over long slice ranges. For brevity, we hereafter stack the reference images and masks in $\mathcal{R}_q$ as $\mathbf{I}_{\mathrm{ref}} \in \mathbb{R}^{r \times 3 \times H \times W}$ and $\mathbf{M}_{\mathrm{ref}} \in \mathbb{R}^{r \times 1 \times H \times W}$.

\paragraph{Dense reference-to-query feature matching.}
Given a query slice $\mathbf{I}_q$ together with its reference set $\mathcal{R}_q$, a shared image encoder extracts spatial key features from both sides:
\begin{equation}
\mathbf{K}^{\mathrm{query}} = \mathrm{Encoder}(\mathbf{I}_q) \in \mathbb{R}^{C \times hw}, \qquad
\mathbf{K}^{\mathrm{ref}} = \mathrm{Encoder}(\mathbf{I}_{\mathrm{ref}}) \in \mathbb{R}^{C \times rhw}.
\end{equation}
In parallel, a lightweight mask encoder embeds the reference image--mask pairs into context-aware value features:
\begin{equation}
\mathbf{V}^{\mathrm{ref}} =
\mathrm{MaskEncoder}\big(\mathrm{Concat}[\mathbf{I}_{\mathrm{ref}}, \mathbf{M}_{\mathrm{ref}}]\big)
\in \mathbb{R}^{C \times rhw}.
\end{equation}

Since $\mathbf{K}^{\mathrm{ref}}$ and $\mathbf{K}^{\mathrm{query}}$ reside in a common embedding space, pixel-wise correspondences between reference and query locations can be established directly. Formally, for a similarity function $c: \mathbb{R}^{C} \times \mathbb{R}^{C} \to \mathbb{R}$, we compute the affinity matrix and normalize it column-wise via softmax:
\begin{equation}
\mathbf{A}_{i,j} = c\!\left(\mathbf{K}^{\mathrm{ref}}_{:,i},\, \mathbf{K}^{\mathrm{query}}_{:,j}\right), \qquad
\mathbf{W}_{i,j} = \frac{\exp(\mathbf{A}_{i,j})}{\sum_n \exp(\mathbf{A}_{n,j})},
\end{equation}
where $\mathbf{A},\mathbf{W} \in \mathbb{R}^{rhw \times hw}$. We instantiate $c$ as a negative squared Euclidean similarity following~\citet{cheng2024putting} (see Appendix~\ref{app_incontext} for the explicit form). The normalized weights $\mathbf{W}$ then aggregate reference values into a query-aligned dense fusion feature carrying per-location semantic cues transferred from the reference set to the query:
\begin{equation}
\mathbf{F}_{\mathrm{fuse}} = \mathbf{V}^{\mathrm{ref}}\,\mathbf{W} \in \mathbb{R}^{C \times hw}.
\end{equation}


\paragraph{Sparse foreground--background queries.}
To complement the dense fusion feature with object-level context, we condense the reference key features $\mathbf{K}^{\mathrm{ref}}$ into a small set of 1-D foreground--background queries $\mathbf{Q}^{s} \in \mathbb{R}^{n_s \times C}$ via masked average pooling. Here, ``sparse'' refers to this compact query set relative to the spatially dense 2-D feature map: the queries summarize object-level semantics, whereas the dense features preserve spatially aligned, location-specific details. The first $n_s/2$ queries summarize foreground semantics and the remaining half summarize background semantics. Let $\mathbf{M}^{\mathrm{pool}} \in [0,1]^{rhw}$ denote $\mathbf{M}_{\mathrm{ref}}$ downsampled and flattened to the key feature resolution, so that $\mathbf{M}^{\mathrm{pool}}_u$ indicates whether spatial location $u$ is foreground. For the $n$-th query, we then define a binary selection weight $v_u^n \in \{0,1\}$ and compute
\begin{equation}
v_u^n =
\begin{cases}
1, & n \le \tfrac{n_s}{2} \ \text{and} \ \mathbf{M}^{\mathrm{pool}}_u \ge 0.5, \\[2pt]
1, & n > \tfrac{n_s}{2} \ \text{and} \ \mathbf{M}^{\mathrm{pool}}_u < 0.5, \\[2pt]
0, & \text{otherwise},
\end{cases}
\qquad
\mathbf{Q}^{s}_{n,:}
=
\frac{\sum_{u=1}^{rhw} (\mathbf{K}^{\mathrm{ref}})^\top_{u,:}\, v_u^n}
{\sum_{u=1}^{rhw} v_u^n} \in \mathbb{R}^{C}.
\end{equation}
The sparse foreground--background queries $\mathbf{Q}^{s}$ and dense fusion features $\mathbf{F}_{\mathrm{fuse}}$ jointly condition the decoder to predict the slice-wise mask:
\begin{equation}
\hat{\mathbf{M}}_q
=
\mathrm{Decoder}\!\left(\mathbf{I}_q \mid \mathbf{F}_{\mathrm{fuse}},\, \mathbf{Q}^{s}\right)
\in [0,1]^{H \times W}.
\end{equation}
Stacking the predictions along the axial dimension yields the full 3D segmentation $\hat{\mathbf{V}} \in [0,1]^{D \times H \times W}$. Notably, this formulation unifies slice-wise propagation and in-context segmentation: both operate on a query image paired with a set of image--mask exemplars and invoke the same dense-plus-sparse prompting pipeline. The two modes differ only in how $\mathcal{R}_q$ is composed: from recently segmented neighboring slices for propagation, or from externally annotated exemplars for in-context segmentation, while the prediction mechanism remains unchanged.

\subsection{Propagation and Consistency Objectives}
\label{sec:loss3d}

\paragraph{Stage II propagation objective.}
To train the model for volumetric propagation, we sample a short slice queue
$\mathcal{Q}=\{(\mathbf{I}_t, \mathbf{M}_t)\}_{t=1}^{T}$ of length $T=8$,
where $\mathbf{I}_t$ is the image at slice $t$ and $\mathbf{M}_t$ is the corresponding binary ground-truth mask. Beyond standard slice-wise supervision, our Stage II objective addresses two challenges specific to propagation: (1) error accumulation along the propagation direction, mitigated by backward propagation with seed reconstruction; and (2) lack of global volumetric consistency across slices, addressed by a lightweight 3D verification loss.

\paragraph{Forward propagation loss.}
Starting from the seed slice, the model obtains an initial mask $\hat{\mathbf{M}}_1$ and propagates the segmentation to subsequent slices. Each propagated slice is supervised by a standard 2D segmentation loss:
\begin{equation}
\mathcal{L}_{2D}(\hat{\mathbf{M}}_t, \mathbf{M}_t)
=
\mathcal{L}_{dice}(\hat{\mathbf{M}}_t, \mathbf{M}_t)
+
\mathcal{L}_{ce}(\hat{\mathbf{M}}_t, \mathbf{M}_t),
\end{equation}
where $\hat{\mathbf{M}}_t$ is the prediction at slice $t$.
The forward propagation loss aggregates the per-slice losses across the queue, with an additional weight $\lambda_{\text{init}}$ on the seed slice:
\begin{equation}
\mathcal{L}_{\text{fwd}}
=
\frac{1}{T}
\left(
\lambda_{\text{init}} \mathcal{L}_{2D}(\hat{\mathbf{M}}_1, \mathbf{M}_1)
+
\sum_{t=2}^{T} \mathcal{L}_{2D}(\hat{\mathbf{M}}_t, \mathbf{M}_t)
\right).
\end{equation}

\paragraph{Backward propagation and seed reconstruction.}
Forward-only propagation suffers from one-way cascading errors: a drift introduced at an early slice is silently inherited by all subsequent predictions with no corrective signal. We therefore additionally propagate from slice $T$ back to the seed, letting $\tilde{\mathbf{M}}_t$ denote the backward prediction at slice $t$; the reconstructed seed $\tilde{\mathbf{M}}_1$ then acts as a cycle-consistency checkpoint, whose supervision gradient penalizes forward drift at its source. We apply the same slice-wise supervision during backward propagation, with an additional weight $\lambda_{\text{seed}}$ on the reconstructed seed slice to emphasize cycle consistency:
\begin{equation}
\mathcal{L}_{\text{bwd}}
=
\frac{1}{T-2}
\sum_{t=2}^{T-1}
\mathcal{L}_{2D}(\tilde{\mathbf{M}}_{t}, \mathbf{M}_{t})
+
\lambda_{\text{seed}} \mathcal{L}_{2D}(\tilde{\mathbf{M}}_1, \mathbf{M}_1).
\end{equation}

The last slice is directly supervised by $\mathcal{L}_{\text{fwd}}$, while the first slice is explicitly reconstructed during backward propagation. The weight $\lambda_{\text{seed}}$ strengthens this reconstruction supervision to help limit accumulated propagation drift. Since our model is memory-based, forward and backward propagation invoke the same reference-conditioned function $f(\mathbf{I}_q, \mathcal{R}_q)$ and differ only in the content of $\mathcal{R}_q$, so $\mathcal{L}_{\text{bwd}}$ introduces no additional parameters. We further provide a theoretical analysis of $\mathcal{L}_{\text{bwd}}$ in Appendix~\ref{app:theory}.

\paragraph{3D supervision.}
While slice-wise supervision improves local accuracy, it does not explicitly enforce volumetric coherence: predictions can be locally plausible on each slice yet form an incoherent 3D shape overall. We therefore introduce a 3D verification loss that operates directly on stacked 2D predictions, without introducing 3D convolutions or other volumetric modules:
\begin{equation}
\hat{\mathbf{M}}^{3D} = [\hat{\mathbf{M}}_1, \hat{\mathbf{M}}_2, \dots, \hat{\mathbf{M}}_T],
\qquad
\mathbf{M}^{3D} = [\mathbf{M}_1, \mathbf{M}_2, \dots, \mathbf{M}_T].
\end{equation}
We then compare the predicted queue against the ground-truth queue using
\begin{equation}
\mathcal{L}_{\text{3D-ver}}
=
\mathcal{L}_{dice}^{3D}(\hat{\mathbf{M}}^{3D}, \mathbf{M}^{3D})
+
\mathcal{L}_{ce}^{3D}(\hat{\mathbf{M}}^{3D}, \mathbf{M}^{3D}),
\end{equation}
where $\mathcal{L}_{dice}^{3D}$ is computed over the entire queue jointly rather than averaging per-slice Dice scores, so errors concentrated in a few slices are not washed out by well-predicted ones. This term encourages coherent anatomical structure across the queue and discourages abrupt disappearance, leakage, or unrealistic shape variation between neighboring slices.

\paragraph{Overall Objective.}
To prevent the model from forgetting the native 2D datasets (e.g., X-ray, dermoscopy) seen in Stage I, we keep the Stage I objective active during Stage II training. The total Stage II loss combines both:
\begin{equation}
\mathcal{L}_{\text{stage II}}
=
\mathcal{L}_{\text{stage I}}
+
\mathcal{L}_{\text{fwd}}
+
\lambda_{\text{bwd}} \mathcal{L}_{\text{bwd}}
+
\lambda_{\text{3d}} \mathcal{L}_{\text{3d}}.
\end{equation}
This combined objective jointly improves local segmentation accuracy, propagation consistency in both directions, seed preservation after a forward-backward cycle, and global volumetric coherence across the full slice queue, while preserving the four-mode segmentation capability of the shared backbone. In our experiments, we set $\lambda_{\text{init}}=2$, $\lambda_{\text{seed}}=2$, $\lambda_{\text{bwd}}=0.7$, and $\lambda_{\text{3d}}=5$.


\section{Experiments}

\subsection{Experimental Setup}

\paragraph{Datasets.} We train UniPro on 9 publicly available datasets~\citep{Ma-2021-AbdomenCT-1K, CHAOS2021, CHAOSdata2019, bilic2023liver_lits, heller2019kits19, al2020dataset, jaeger2013automatic, candemir2013lung, ngoc2021neounet, codella2019skin, tschandl2018ham10000}, covering diverse imaging modalities, including CT, MRI, ultrasound, X-ray, and dermoscopy, as well as various clinical targets such as organs, tumors, and lesions. For evaluation, we additionally include four external datasets, BTCV~\citep{landman2015miccai}, ACDC~\citep{bernard2018acdc}, UW-SC~\citep{uwaterloo_skincancer}, and BUS~\citep{yap2017automated}, to assess cross-dataset generalization, and one unseen-class 3D dataset, BraTS~\citep{baid2021rsna}, to test adaptation to novel anatomical and pathological structures. Full dataset and preprocessing details are provided in Appendix~\ref{sec:appdataset}.

\paragraph{Baselines.}
We compare UniPro against three groups of baselines. For slice-based segmentation, we include semantic models nnU-Net (2D)~\citep{isensee2021nnu}, UniSeg~\citep{ye2023uniseg}, and Hermes~\citep{gao2024training}, and in-context models UniverSeg~\citep{butoi2023universeg}, SegMIC~\citep{zhao2026segmic}, and Iris~\citep{gao2025show}. For volumetric segmentation, we report nnU-Net (3D)~\citep{isensee2021nnu} as a fully supervised 3D upper-bound reference and nnInteractive~\citep{isensee2025nninteractive} as a strong interactive 3D reference. For propagation-based segmentation, we include SAM2~\citep{ravi2024sam}, PAM~\citep{chen2025pam}, MedSAM2~\citep{ma2025medsam2}, and iSegFormer~\citep{liu2022isegformer}. For UniPro itself, we report two inference modes: UniPro (no-prop), where each 2D slice is segmented independently under the corresponding semantic or in-context setting; and UniPro (prop), where only a seed slice is initialized by semantic, in-context or interactive segmentation and the remaining slices are segmented through the propagation mechanism, producing a coherent 3D segmentation from sparse 2D initialization.


\paragraph{Evaluation protocol.}
All slice-based baselines are trained on 2D slices extracted from our curated datasets under the same protocol as UniPro, ensuring a fair comparison at the slice level. With the exception of SAM2, the 3D and propagation baselines are trained on the same corresponding 3D datasets with matched preprocessing, optimization settings, and training budgets as UniPro, while preserving their method-specific input formats and inference interfaces. Performance is measured by the Dice coefficient: 2D Dice is computed per slice for native 2D datasets, and 3D Dice is computed over the entire volume formed by target-containing slices. Full implementation details are provided in the Appendix~\ref{appendix_training}.


\begin{table*}[t!]
\centering
\small
\caption{
Propagation-based segmentation (3D Dice, \%, $\uparrow$).
All methods are initialized from a single positive click on one seed slice and propagated to the remaining slices.
}
\label{tab1}
\renewcommand{\arraystretch}{0.95}
\setlength{\tabcolsep}{3pt}

\begin{tabular*}{\textwidth}{@{\extracolsep{\fill}}l|ccccc|cc|c}
\toprule
\multirow{2}{*}{\textbf{Method}}
& \multicolumn{5}{c|}{\textbf{Internal}}
& \multicolumn{2}{c|}{\textbf{External}}
& \textbf{Unseen} \\
\cmidrule(lr){2-6}\cmidrule(lr){7-8}\cmidrule(lr){9-9}
& Abd-1K
& CHAOS
& M\&Ms
& LiTS
& KiTS
& BTCV
& ACDC
& BraTS \\
\midrule

\multicolumn{9}{l}{\textit{3D Upper Bound}} \\
nnInteractive
& 76.38 & 83.04 & 78.58 & 49.76 & 63.08
& 67.44 & 79.87
& 51.25 \\
\midrule

\multicolumn{9}{l}{\textit{General Pretrained Model}} \\
SAM2
& 45.00 & 56.61 & 47.60 & 23.08 & 51.61
& 37.10 & 50.71
& 32.38 \\
\midrule

\multicolumn{9}{l}{\textit{Propagation Model}} \\
iSegFormer
& 83.00 & 82.27 & 82.55 & 51.46 & 68.77
& 64.78 & 77.05
& 42.25 \\

PAM
& 85.59 & 86.41 & 79.69 & 60.83 & 63.20
& 77.49 & 79.06
& 39.85 \\

MedSAM2
& 86.29 & \textbf{90.64} & 82.97 & 62.25 & 73.32
& 80.55 & 84.18
& 40.74 \\

UniPro (prop)
& \textbf{88.62} & 89.70 & \textbf{87.64} & \textbf{66.95} & \textbf{76.95}
& \textbf{84.40} & \textbf{87.97}
& \textbf{54.89} \\

\bottomrule
\end{tabular*}
\vspace{-0.8em}
\end{table*}

\subsection{Main Results on Interactive-to-Propagation Segmentation}

Table~\ref{tab1} evaluates UniPro under a single-click propagation setting on 3D datasets, reflecting a practical annotation workflow where a user provides an initial click on one representative slice and the model completes the full volume automatically. UniPro (prop) achieves the best performance on 7 of 8 datasets, improving over the strongest propagation baseline MedSAM2 by 4.70 Dice points on LiTS (66.95\% vs 62.25\%) and 4.67 Dice points on M\&Ms, and surpassing PAM and iSegFormer by larger margins on nearly all datasets. SAM2, despite large-scale natural image pretraining, performs poorly on medical volumes, indicating that natural-image pretraining alone does not transfer to volumetric medical segmentation. Notably, UniPro (prop) also outperforms the dedicated 3D interactive model nnInteractive, suggesting that a well-trained slice-based propagation can rival full-3D interactive approaches. UniPro further generalizes beyond the training distribution. On external datasets, it reaches 84.40\% on BTCV and 87.97\% on ACDC. Most strikingly, on BraTS (brain tumors, an anatomy unseen during training), UniPro achieves 54.89\% compared to MedSAM2's 40.74\%, a gain of 14.15 Dice points that suggests the reference-conditioned mechanism transfers beyond the anatomies seen during training. Together, these results demonstrate that UniPro delivers both reliable seed-slice interaction and robust volumetric propagation across domain shifts and novel anatomies, supporting accurate 3D segmentation from minimal user input. More statistical analysis of these results is provided
in Appendix~\ref{app:propagation_statistics}.

\subsection{Main Results on Semantic and In-Context Segmentation}

Table~\ref{tab2} reports semantic and in-context segmentation across 3D and 2D datasets. On 3D datasets, UniPro (prop) achieves the best results on nearly all internal datasets in both semantic and in-context settings. In the semantic setting, UniPro (prop) reaches 87.73\% on M\&Ms and 89.49\% on ACDC, surpassing the fully supervised 3D nnU-Net upper bound on both datasets despite using a 2D-based architecture. Propagation consistently improves over UniPro (no-prop) on six of the seven 3D datasets (e.g., +2.89 Dice points on CHAOS, +2.45 on KiTS), demonstrating that single-slice initialization combined with propagation effectively leverages volumetric continuity. Similar gains appear in the in-context setting, where UniPro (prop) outperforms strong baselines (UniverSeg, SegMIC, Iris) on most 3D datasets, including a gain of 2.42 Dice points over Iris on Abd-1K (87.64\% vs 85.22\%) and a gain of 4.93 Dice points on ACDC (88.44\% vs 83.51\%). This confirms that propagation serves as an effective intra-volume extension of in-context segmentation. On 2D datasets, UniPro remains competitive but does not always achieve the best performance: while it leads on UW-SC, specialized 2D models such as Hermes and Iris achieve the best results on most internal datasets. This reflects a training trade-off introduced by Stage-2 propagation adaptation: as the model is further trained for sequential slice prediction, it slightly drifts from the optimal solution for static 2D segmentation. We further discuss this trade-off in Discussion.

\begin{table*}[t!]
\centering
\small
\caption{
Semantic and in-context segmentation across internal and external datasets (Dice, \%, $\uparrow$). All in-context models use one-shot inference; propagation is applied only to 3D volumes. nnU-Net (3D) is reported as a fully supervised upper bound. Best results in each block are in bold.
}
\label{tab2}
\setlength{\tabcolsep}{2.2pt}
\renewcommand{\arraystretch}{0.9}

\begin{tabular*}{\textwidth}{@{\extracolsep{\fill}}l|ccccc|cc|cccc|cc}
\toprule
\multirow{3}{*}{\textbf{Method}}
& \multicolumn{7}{c|}{\textbf{3D Datasets}}
& \multicolumn{6}{c}{\textbf{2D Datasets}} \\
\cmidrule(lr){2-8}\cmidrule(lr){9-14}
& \multicolumn{5}{c|}{\textbf{Internal}}
& \multicolumn{2}{c|}{\textbf{External}}
& \multicolumn{4}{c|}{\textbf{Internal}}
& \multicolumn{2}{c}{\textbf{External}} \\
\cmidrule(lr){2-6}\cmidrule(lr){7-8}
\cmidrule(lr){9-12}\cmidrule(lr){13-14}
& \shortstack{{\scriptsize Abd-1K}}
& \shortstack{{\scriptsize CHAOS}}
& \shortstack{{\scriptsize M\&Ms}}
& \shortstack{{\scriptsize LiTS}}
& \shortstack{{\scriptsize KiTS}}
& \shortstack{{\scriptsize BTCV}}
& \shortstack{{\scriptsize ACDC}}
& \shortstack{{\scriptsize Breast}}
& \shortstack{{\scriptsize BKAI}}
& \shortstack{{\scriptsize Chest}}
& \shortstack{{\scriptsize ISIC}}
& \shortstack{{\scriptsize BUS}}
& \shortstack{{\scriptsize UW-SC}} \\
\midrule

\multicolumn{14}{l}{\textit{3D Upper Bound}} \\
nnU-Net (3D)
& 88.45 & 90.15 & 85.47 & 63.14 & 81.28
& 86.02 & 88.59
& -- & -- & -- & --
& -- & -- \\
\midrule

\multicolumn{14}{l}{\textit{Semantic segmentation}} \\

nnU-Net (2D)
& 79.13 & 80.29 & 85.42 & 60.12 & 75.99
& 70.73 & 84.64
& 71.68 & 83.45 & 94.95 & \textbf{86.76}
& 68.00 & 79.78 \\

UniSeg
& 84.26 & 84.17 & 85.66 & 56.23 & 72.68
& 81.53 & 86.91
& 72.32 & 81.33 & 94.51 & 83.86
& 67.12 & 77.93 \\

Hermes
& 86.43 & 86.25 & 86.15 & 60.62 & 75.17
& 84.21 & 86.59
& \textbf{75.21} & \textbf{84.12} & \textbf{95.59} & 85.33
& \textbf{68.67} & 82.31 \\

UniPro (no-prop)
& 87.30 & 85.01 & 86.95 & 60.85 & 75.56
& \textbf{84.36} & 87.60
& 73.87 & 82.72 & 95.03 & 86.11
& 67.29 & \textbf{82.57} \\

UniPro (prop)
& \textbf{88.20} & \textbf{87.90} & \textbf{87.73} & \textbf{61.15} & \textbf{78.01}
& 83.59 & \textbf{89.49}
& -- & -- & -- & --
& -- & -- \\
\midrule

\multicolumn{14}{l}{\textit{In-context segmentation}} \\

UniverSeg
& 64.21 & 67.61 & 63.38 & 45.42 & 60.70
& 58.93 & 59.81
& 56.90 & 73.91 & 94.88 & 79.60
& 45.23 & 68.31 \\

SegMIC
& 74.12 & 72.60 & 82.91 & 48.35 & 61.56
& 72.77 & 78.14
& 64.18 & 77.53 & \textbf{96.13} & 81.20
& 68.39 & 80.60 \\

Iris
& 85.22 & 84.31 & 84.77 & 59.10 & 74.29
& 79.94 & 83.51
& \textbf{73.99} & \textbf{82.53} & 95.40 & \textbf{85.21}
& \textbf{71.44} & 82.57 \\

UniPro (no-prop)
& 87.01 & 84.48 & 86.65 & 57.77 & 75.65
& 82.48 & 85.91
& 71.63 & 80.97 & 94.99 & 84.62
& 68.78 & \textbf{84.92} \\

UniPro (prop)
& \textbf{87.64} & \textbf{85.01} & \textbf{87.31} & \textbf{61.15} & \textbf{76.37}
& \textbf{83.18} & \textbf{88.44}
& -- & -- & -- & --
& -- & -- \\
\bottomrule
\end{tabular*}
\vspace{-0.8em}
\end{table*}

\begin{table}[t]
\centering
\caption{
Ablation studies on Stage-2 propagation training.
}
\label{tab:ablation_combined}
\begin{subtable}[t]{0.48\linewidth}
\centering
\caption{Effect of bidirectional and 3D verification supervision on ACDC.}
\label{tab:ablation_bidir_3dloss_acdc}
\resizebox{\linewidth}{!}{
\begin{tabular}{lccc}
\toprule
Variant 
& Semantic
& In-context
& Interactive \\
\midrule
w/o $\mathcal{L}_{\text{bwd}}$ \& $\mathcal{L}_{\text{3D}}$
& 87.78 
& 86.13
& 85.61 \\
w/o $\mathcal{L}_{\text{bwd}}$
& 89.10
& 87.39 
& 87.28 \\
w/o $\mathcal{L}_{\text{3D}}$
&  87.92
&  86.51
& 86.20 \\
Full model
& 89.49 
& 88.44 
& 87.97 \\
\bottomrule
\end{tabular}
}
\end{subtable}
\hfill
\begin{subtable}[t]{0.48\linewidth}
\centering
\caption{Trade-off between 3D propagation gains and native 2D segmentation accuracy.}
\label{tab:stage2_interleaving}
\resizebox{\linewidth}{!}{
\begin{tabular}{lcc}
\toprule
Training Strategy 
& ACDC 
& ISIC \\
\midrule
Stage-1 only 
& 86.73
& 87.54 \\
Stage-2 w/o 2D interleaving
& 89.46 
& 82.40 \\
Stage-2 w/ 2D interleaving
& 89.49 
& 86.11 \\
\bottomrule
\end{tabular}
}
\end{subtable}
\vspace{-0.8em}
\end{table}

\subsection{Ablation Study}

\paragraph{Effect of propagation supervision.}
Table~\ref{tab:ablation_combined}(a) ablates the bidirectional consistency loss $\mathcal{L}_{\text{bwd}}$ and the 3D verification loss $\mathcal{L}_{\text{3D}}$ on ACDC under three propagation modes. Removing both reduces semantic Dice from 89.49\% to 87.78\%, with comparable drops on in-context (88.44\% to 86.13\%) and interactive (87.97\% to 85.61\%), confirming that propagation-specific supervision is essential beyond per-slice losses. Individually, $\mathcal{L}_{\text{3D}}$ contributes the larger share (a 1.57-Dice-point drop in semantic segmentation when removed) by enforcing cross-slice coherence that per-slice optimization cannot capture, while $\mathcal{L}_{\text{bwd}}$ adds a smaller but consistent gain (0.39 Dice points on top of $\mathcal{L}_{\text{3D}}$) by suppressing one-way drift. The two are complementary: $\mathcal{L}_{\text{3D}}$ regularizes global volumetric structure, $\mathcal{L}_{\text{bwd}}$ stabilizes the sequential propagation process.

\paragraph{Effect of interleaved Stage-2 training.}
We study the trade-off between learning 3D propagation and preserving native 2D segmentation ability. As shown in Table~\ref{tab:ablation_combined}(b), Stage-2 training substantially improves ACDC (86.73\% to 89.46\%) but degrades ISIC by 5.14 Dice points (87.54\% to 82.40\%) when trained only on propagation samples, indicating partial forgetting of 2D capability. Interleaving 2D slice-level batches during Stage-2 recovers most of the lost 2D performance (ISIC 82.40\% to 86.11\%) while leaving ACDC essentially unchanged (89.46\% to 89.49\%). However, ISIC remains 1.43 Dice points below the Stage-1 only model, suggesting an inherent trade-off between volumetric propagation and native 2D segmentation. A complementary ablation further supports the benefit of combining Stage-I pretraining with Stage-II propagation training: the full pipeline outperforms either stage alone under GT-seed initialization (Appendix~\ref{app:training_stage_ablation}).

\paragraph{Effect of flexible interactive correction.}
We evaluate whether propagation reduces annotation effort during interactive 3D segmentation. Starting from an empty mask, the user iteratively places clicks until the 3D Dice reaches a target threshold ($T=0.90$ for ACDC, $T=0.80$ for the unseen BraTS) or a click budget of 100 is exhausted. We compare two settings: a slice-wise baseline, where each click updates only the clicked slice, and propagation, where the slice refined by user clicks is added to an anchor pool and the volume is re-segmented through propagation from the updated anchors. As shown in Figure~\ref{fig:click_efficiency_sub}, propagation reaches the target Dice with fewer clicks (27.7 vs 25.2 on ACDC; 92.0 vs 57.0 on BraTS) and substantially fewer edited slices (8.5 vs 3.3 on ACDC; 62.4 vs 12.1 on BraTS). The gain is most pronounced on BraTS, where fragmented tumor regions make slice-by-slice annotation inefficient: propagation reduces the number of edited slices by over 80\% (62.4 to 12.1) and clicks by nearly 40\% (92.0 to 57.0). These results show that corrected slices serve as effective propagation anchors, reducing the numbers of clicks and edited slices in the simulated annotation protocol. Full simulation details are provided in Appendix~\ref{appendix:interactive-vs-propagate}.

\begin{wrapfigure}{r}{0.48\textwidth}
\vspace{-1.0em}
\centering
\begin{subfigure}[t]{\linewidth}
\centering
\includegraphics[width=\linewidth]{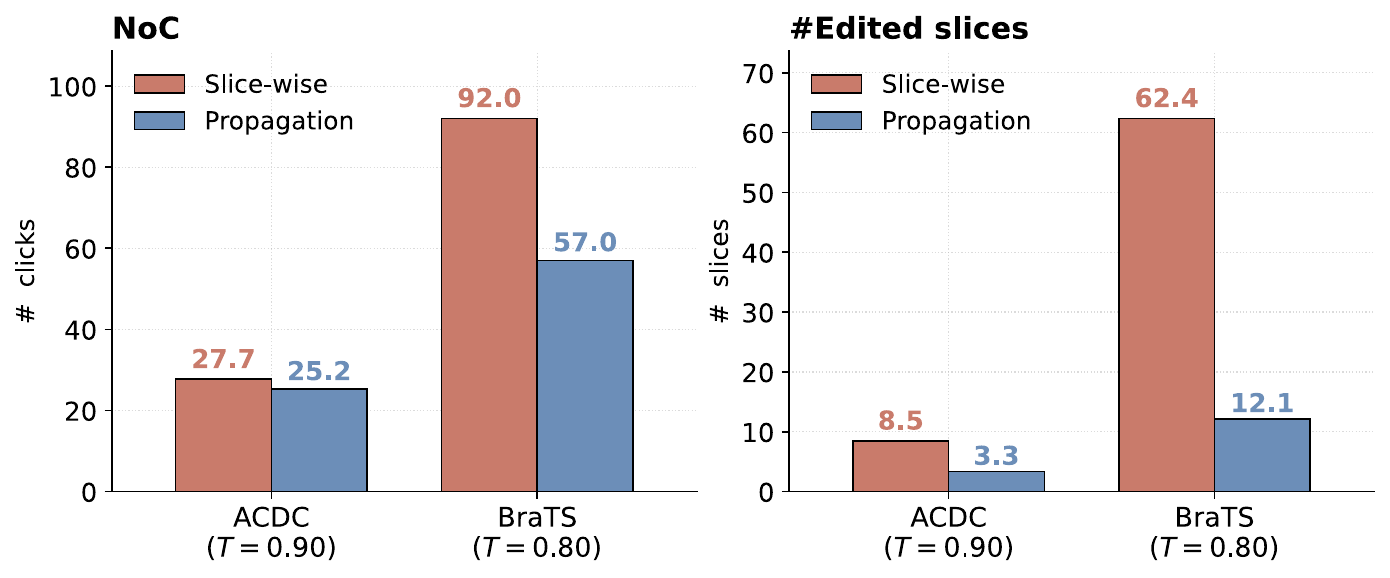}
\caption{Annotation efficiency under interactive correction: UniPro reaches the target 3D Dice with fewer clicks and edited slices than slice-wise editing.}
\label{fig:click_efficiency_sub}
\end{subfigure}

\begin{subfigure}[t]{0.92\linewidth}
\centering
\includegraphics[width=\linewidth]{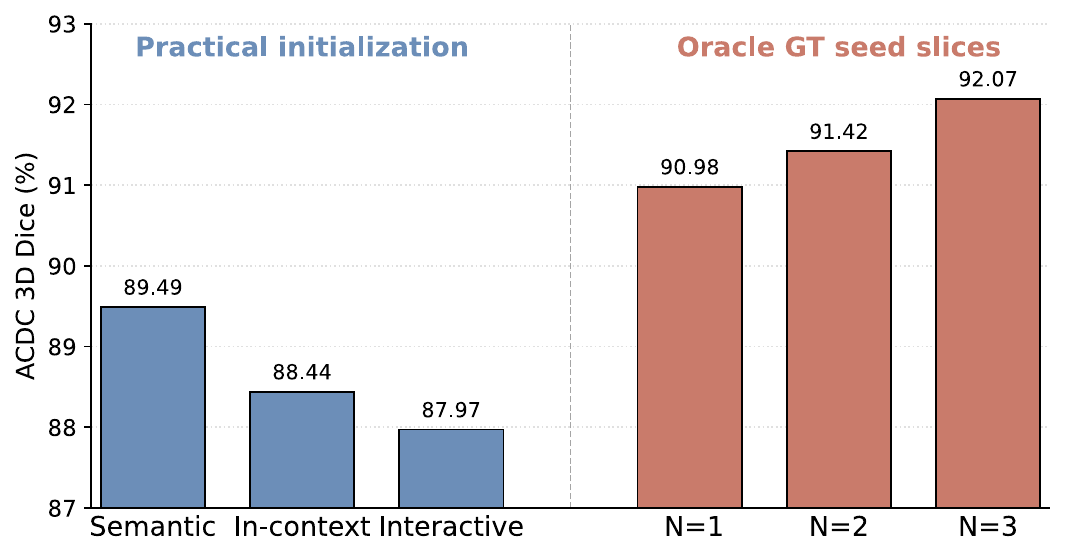}
\caption{Seed initialization on ACDC: practical seeds (semantic, in-context, interactive) yield competitive single-seed propagation; multiple GT reference slices further improve performance.}
\label{fig:seed_initialization_sub}
\end{subfigure}

\caption{Ablation analysis of UniPro's 3D propagation behavior.}
\label{fig:propagation_ablation}
\vspace{-1.2em}
\end{wrapfigure}

\paragraph{Effect of seed initialization.}
We analyze how reference quality and quantity affect volumetric propagation on ACDC. Here $N$ denotes the number of GT slices retained as anchors in the sliding reference window during propagation. As shown in Figure~\ref{fig:seed_initialization_sub}, practical seeds (semantic, in-context, interactive) reach 87.97--89.49\% Dice from a single seed slice, only 1--3 Dice points below the oracle GT single-seed setting (90.98\%). Increasing the number of GT reference slices monotonically improves performance, from 90.98\% at $N=1$ to 92.07\% at $N=3$, indicating that UniPro can effectively exploit additional high-quality anchors when available. Together, these results show that propagation is robust to the choice of practical initialization and scales gracefully with stronger references.

\paragraph{Robustness to seed-slice selection.}
On ACDC, we vary the seed location across five quantiles of the target extent and a randomly selected target-containing slice, using ground-truth seed masks to isolate the effect of location. Dice remains within 89.46--91.03\%, with random initialization achieving 90.12\% compared with 90.98\% for the median seed. Central slices generally yield slightly better results, consistent with their typically larger target regions providing more informative references, whereas boundary seeds show only modest degradation. Overall, propagation remains effective across the evaluated seed locations under GT-mask initialization; detailed results are provided in Appendix~\ref{app:seed_robustness}.





\subsection{Visualization}
\label{sec:visualization}

We complement the quantitative evaluation with qualitative
comparisons and success--failure analysis.
In the pancreas example
(Figure~\ref{fig:pancreas_qualitative}),
UniPro maintains coherent segmentation across distant
slices, whereas the compared baselines exhibit leakage,
incomplete coverage, or loss of the target.
Figure~\ref{fig:qualitative_multicomponent} extends this
analysis to irregular and multi-component tumors in LiTS,
KiTS, and BraTS, showing the seed slice, selected propagated
slices at increasing absolute distances, and corresponding
3D ground-truth and predicted masks.
In the successful examples, UniPro segments the dominant
irregular tumors in LiTS and KiTS and a relatively compact
lesion in BraTS, despite brain anatomy and brain tumors
being absent from training.
Failures include missed small or spatially separated
components in LiTS and KiTS and incomplete segmentation
of a heterogeneous, fragmented BraTS lesion.
Local errors also remain in the successful examples.
These observations suggest that propagation depends
not only on geometric complexity or anatomical familiarity,
but also on component size, appearance consistency,
and cross-slice continuity.
Detailed visualizations and analyses are provided
in Appendix~E.

\section{Discussion}

We have presented UniPro, a unified framework that connects multiple segmentation paradigms across native 2D and 3D data. UniPro is particularly suited to heterogeneous segmentation and volumetric annotation workflows that combine automatic or reference-guided initialization, interactive correction, and propagation. Its shared interface allows difficult slices to be corrected locally and the corrections propagated through the volume. Dedicated models may remain preferable for fixed single-task applications.

\paragraph{Limitations and future work.}Several limitations remain and motivate future work. \textbf{Trade-offs between 2D and 3D.} Stage-2 propagation adaptation improves 3D segmentation but can reduce native 2D accuracy, which interleaved 2D training only partly mitigates. Compared with the 3D nnU-Net baseline, UniPro uses less peak inference memory but requires longer inference time, partly due to sequential propagation (Appendix~\ref{app:runtime_memory}). Future work could explore better multi-task optimization and propagation strategies that allow greater parallelization. \textbf{Gap to fully supervised 3D models.} UniPro trails the fully supervised 3D nnU-Net on most datasets in the semantic setting. It processes ordered 2D slices without explicit spacing or physical-distance inputs. Although isotropic spacing is not required, large through-plane gaps can weaken cross-slice correspondence. Incorporating spatial information, stronger cross-slice integration, or lightweight volumetric components may help close this gap. \textbf{Irregular and multi-component targets.}
UniPro imposes no explicit convexity or single-component
constraint. However, single-click initialization may miss
disconnected regions, and weak cross-slice correspondence
can hinder propagation. Additional interaction or alternative
initialization may improve coverage. Robustness to fragmented
targets and evaluation on untested structures, such as the
intestine, remain directions for future work. \textbf{Towards a medical segmentation foundation model.} Our training corpus remains modest relative to that of large-scale models such as SAM2~\citep{ravi2024sam}. Expanding data diversity and model capacity, together with broader evaluation across domains, is an important direction for improving generalization while retaining the unified segmentation interface.

\section{Conclusion}
We present UniPro, a unified slice-based framework that supports semantic, in-context, interactive, and propagation-based medical image segmentation within a single model. By formulating volumetric propagation as a special case of reference-conditioned in-context prediction, UniPro bridges native 2D images and 3D volumes through a shared backbone, enabling coherent volumetric segmentation from sparse 2D initialization. Extensive experiments across diverse modalities, anatomies, and segmentation settings show that UniPro achieves strong performance and cross-dataset generalization, while reducing clicks and edited slices in simulated interactive volumetric annotation. We believe this work points to a promising direction toward scalable, unified medical segmentation models that generalize across tasks, modalities, and dimensions.

\clearpage

\section*{Acknowledgments}
This research has been partially funded by research grants to D.~Metaxas through NSF: 2310966, 2235405, 2212301, 2003874, 1951890 and NIH 2R01HL127661.

\paragraph{Competing interests.}
The authors declare no competing interests.

\bibliographystyle{plainnat}
\bibliography{main}

\appendix

\section{Method}

\subsection{Problem Definition}

\subsubsection{Unified multi-mode segmentation.}
UniPro formulates medical image segmentation as a unified condition-driven prediction problem, where different forms of guidance specify what or where to segment within the same shared architecture. Given an input image $\mathbf{I}$ and a mode-specific condition $\mathcal{G}$, the model predicts
\begin{equation}
    \hat{\mathbf{M}} = f_{\theta}(\mathbf{I} \mid \mathcal{G}).
\end{equation}
Under this formulation, UniPro supports three primary 2D segmentation modes (semantic, in-context, and interactive) and further extends in-context segmentation to volumetric propagation as a fourth mode.
\begin{itemize}
    \item \textbf{Mode-1: Semantic segmentation.}
    Semantic segmentation uses learned class-level priors as the condition. Given a learnable class embedding matrix $\mathbf{P} \in \mathbb{R}^{N_{\mathrm{cls}} \times (p \times C)}$, each class $n$ is represented by $\mathbf{p}_n \in \mathbb{R}^{p \times C}$, which contains $p$ semantic query vectors. The prediction for class $n$ is formulated as
    \begin{equation}
        \hat{\mathbf{M}}_n = f_{\theta}(\mathbf{I} \mid \mathbf{p}_n).
    \end{equation}
    \item \textbf{Mode-2: In-context segmentation.}
    In-context segmentation defines the target object using annotated reference examples instead of a predefined class label. Given a query image $\mathbf{I}_q$ and a support set $\mathcal{S}=\{(\mathbf{I}_{\mathrm{ref}}^{(i)}, \mathbf{M}_{\mathrm{ref}}^{(i)})\}_{i=1}^{K}$, UniPro predicts
    \begin{equation}
        \hat{\mathbf{M}}_q = f_{\theta}(\mathbf{I}_q \mid \mathcal{S}).
    \end{equation}
    \item \textbf{Mode-3: Interactive segmentation.}
    Interactive segmentation uses user-provided prompts, such as positive and negative clicks, as the condition. Given a click set $\mathcal{C}=\{c_i\}_{i=1}^{N_c}$, the model predicts
    \begin{equation}
        \hat{\mathbf{M}} = f_{\theta}(\mathbf{I} \mid \mathcal{C}).
    \end{equation}
    This mode can either produce masks from scratch or refine predictions initialized by semantic or in-context segmentation.
    \item \textbf{Mode-4: Propagation-based volumetric segmentation.}
    Propagation is treated as an intra-volume special case of in-context segmentation. Instead of using external annotated support images, the support set is constructed from previously segmented slices within the same 3D volume. Given a volume $\mathbf{V}=\{\mathbf{I}_z\}_{z=1}^{D}$ and a seed prediction $\hat{\mathbf{M}}_{z^*}$ initialized by any of the 2D modes or provided as ground truth, UniPro segments the remaining slices by repeatedly applying
    \begin{equation}
        \hat{\mathbf{M}}_z = f_{\theta}(\mathbf{I}_z \mid \mathcal{R}_z),
    \end{equation}
    where $\mathcal{R}_z=\{(\mathbf{I}_{r}, \hat{\mathbf{M}}_{r})\}_{r \in \mathcal{N}(z)}$ denotes the intra-volume reference set formed by the seed slice and nearby propagated slices. Thus, Mode-4 shares the same reference-conditioned formulation as Mode-2, but restricts the support examples to the current volume and applies the prediction sequentially along the slice dimension. This enables UniPro to handle 3D volumetric segmentation without introducing a separate 3D-specific architecture.
\end{itemize}
In the following sections, we describe how UniPro realizes these modes within a shared architecture by converting heterogeneous segmentation conditions into a unified condition representation.

\subsubsection{Semantic segmentation}
Given an input image $\mathbf{I} \in \mathbb{R}^{3 \times H \times W}$, the image encoder extracts a visual feature map:
\begin{equation}
    \mathbf{F} = \mathrm{Encoder}(\mathbf{I}), 
    \quad \mathbf{F} \in \mathbb{R}^{C \times h \times w},
\end{equation}
where $C$ denotes the feature dimension and $h, w$ denote the spatial resolution of the encoded features.
In semantic segmentation, the condition is provided by learned class-level sparse queries; since no spatial reference mask or user interaction is available, this mode relies solely on sparse queries. We define a learnable semantic prompt matrix $\mathbf{P} \in \mathbb{R}^{N_{\mathrm{cls}} \times (p \times C)}$, where each class $n$ is associated with a prompt set $\mathbf{p}_n \in \mathbb{R}^{p \times C}$. These prompts are optimized during training and encode class-level anatomical priors. To segment class $n$, UniPro feeds the image feature $\mathbf{F}$ and the corresponding semantic prompt $\mathbf{p}_n$ into the shared decoder:
\begin{equation}
    \hat{\mathbf{M}}_n = \mathrm{Decoder}(\mathbf{F} \mid \mathbf{p}_n).
\end{equation}

\subsubsection{In-context segmentation}
\label{app_incontext}
In-context segmentation uses annotated reference examples as the conditioning information. Let the support set be $\mathcal{S} = \{(\mathbf{I}_{\mathrm{ref}}, \mathbf{M}_{\mathrm{ref}})\}$, where 
$\mathbf{I}_{\mathrm{ref}} \in \mathbb{R}^{N_{\mathrm{ref}} \times 3 \times H \times W}$ 
denotes the reference images and 
$\mathbf{M}_{\mathrm{ref}} \in \mathbb{R}^{N_{\mathrm{ref}} \times 1 \times H \times W}$ 
denotes the corresponding segmentation masks. This conditioning signal is encoded into two complementary components: sparse foreground--background queries and spatially aligned dense fusion features.

Given a query image $\mathbf{I}_q$ and reference images $\mathbf{I}_{\mathrm{ref}}$, the shared image encoder first extracts query and reference features:
\begin{equation}
\mathbf{F}_q = \mathrm{Encoder}(\mathbf{I}_{q}), 
\quad
\mathbf{F}_k = \mathrm{Encoder}(\mathbf{I}_{\mathrm{ref}}),
\end{equation}
while the reference image--mask pairs are encoded into value features by a lightweight mask encoder:
\begin{equation}
\mathbf{F}_v = \mathrm{MaskEncoder}(\mathrm{Concat}[\mathbf{I}_{\mathrm{ref}}, \mathbf{M}_{\mathrm{ref}}]).
\end{equation}

For dense prompts, these encoded features are flattened as
\begin{equation}
\mathbf{K}^{\mathrm{ref}} = \mathrm{Flat}(\mathbf{F}_k) \in \mathbb{R}^{C \times N_{\mathrm{ref}}hw}, \quad
\mathbf{V}^{\mathrm{ref}} = \mathrm{Flat}(\mathbf{F}_v) \in \mathbb{R}^{C \times N_{\mathrm{ref}}hw}, \quad
\mathbf{K}^{q} = \mathrm{Flat}(\mathbf{F}_q) \in \mathbb{R}^{C \times hw}.
\end{equation}

Since $\mathbf{K}^{\mathrm{ref}}$ and $\mathbf{K}^{q}$ are extracted by the same encoder, they lie in a shared feature space where query--reference correspondences can be computed. The value features $\mathbf{V}^{\mathrm{ref}}$, derived from reference image--mask pairs, are then transferred to the query space through these correspondences, providing dense mask-aware guidance. Following~\citep{cheng2021rethinking, cheng2024putting}, we adopt the negative squared Euclidean distance to compute the affinity matrix $\mathbf{A}$:
\begin{equation}
\mathbf{A}_{i,j} = -\left\| \mathbf{K}^{\mathrm{ref}}_{:,i} - \mathbf{K}^{q}_{:,j} \right\|_2^2 
= 2\, \mathbf{K}^{\mathrm{ref}}_{:,i}{}^{\top}\mathbf{K}^{q}_{:,j} 
- \left\| \mathbf{K}^{\mathrm{ref}}_{:,i} \right\|_2^2 
- \left\| \mathbf{K}^{q}_{:,j} \right\|_2^2,
\end{equation}
where the last term can be omitted as shown in~\citep{cheng2021rethinking}, which improves efficiency and reduces computational cost. The affinity matrix is then normalized along the reference dimension to obtain the soft assignment weights:
\begin{equation}
\mathbf{W}_{i,j} = \frac{\exp(\mathbf{A}_{i,j})}{\sum_{n} \exp(\mathbf{A}_{n,j})}, 
\quad
\mathbf{A}, \mathbf{W} \in \mathbb{R}^{N_{\mathrm{ref}}hw \times hw}.
\end{equation}
The affinity weights $\mathbf{W}$ project the value features $\mathbf{V}^{\mathrm{ref}}$ into the query-aligned space, producing the 2D fusion feature map:
\begin{equation}
\mathbf{F}_{\mathrm{fuse}} = \mathbf{V}^{\mathrm{ref}}\mathbf{W}, 
\quad
\mathbf{F}_{\mathrm{fuse}} \in \mathbb{R}^{C \times hw}.
\end{equation}
To facilitate subsequent operations, feature channels at all scales are projected to 256 via a linear layer before being fed into the decoder.

For the sparse foreground--background queries, we construct a set of $n_s$ 1-D query vectors $\mathbf{Q}^s \in \mathbb{R}^{n_s \times C}$ from the encoded reference features $\mathbf{K}^{\mathrm{ref}}$, where the first half represents foreground queries and the second half represents background queries. A pooling mask $\mathbf{M}^{\mathrm{pool}} \in [0,1]^{N_{\mathrm{ref}}hw}$ is obtained by downsampling and flattening $\mathbf{M}_{\mathrm{ref}}$. The binary selection vector for the $n$-th object query is defined as
\begin{equation}
\mathbf{v}^{n}_i =
\begin{cases}
0, & n \le \frac{n_s}{2} \ \text{and} \ \mathbf{M}^{\mathrm{pool}}_i < 0.5, \\[4pt]
0, & n > \frac{n_s}{2} \ \text{and} \ \mathbf{M}^{\mathrm{pool}}_i \ge 0.5, \\[4pt]
1, & \text{otherwise}.
\end{cases}
\end{equation}
The corresponding object query is computed by masked average pooling:
\begin{equation}
\mathbf{Q}^{s}_{n,:} =
\frac{\sum_{i=1}^{N_{\mathrm{ref}}hw}(\mathbf{K}^{\mathrm{ref}})^{\top}_{i,:}\mathbf{v}^{n}_i}
{\sum_{i=1}^{N_{\mathrm{ref}}hw}\mathbf{v}^{n}_i},
\quad
\mathbf{Q}^{s}_{n,:}\in \mathbb{R}^{C}.
\end{equation}

In summary, the reference examples act as the condition for in-context segmentation. They are encoded into sparse foreground--background queries that capture object-level semantics and dense fusion features that provide spatially aligned mask guidance. The shared decoder then predicts the query mask conditioned on both components:
\begin{equation}
\hat{\mathbf{M}}_q =
\mathrm{Decoder}(\mathbf{F}_{\mathrm{fuse}} \mid \mathbf{Q}^{s}).
\end{equation}

\subsubsection{Interactive segmentation}
Interactive segmentation conditions the prediction on user-provided clicks. Given an input image $\mathbf{I}$ and a click set $\mathcal{C}=\{c_i\}_{i=1}^{N_c}$, we follow~\citep{sofiiuk2022reviving, liu2023simpleclick} and assemble a three-channel click prompt $\mathbf{I}_{\mathrm{click}} \in \mathbb{R}^{3 \times H \times W}$: the first two channels are disk-encoded maps for positive and negative clicks, and the third channel carries the mask predicted in the previous interaction round, which serves as a memory of past corrections. This prompt provides a 2D dense signal for spatial refinement and is processed by a lightweight prompt encoder.

The encoded click feature is then fused with the visual feature, where the choice of visual feature depends on whether interactive segmentation operates standalone or as a refinement step on top of another mode:
\begin{equation}
\mathbf{F}_{\mathrm{fuse}} =
\begin{cases}
\mathrm{PromptEncoder}(\mathbf{I}_{\mathrm{click}}) + \mathrm{Encoder}(\mathbf{I}), 
& \text{Mode-3 or refining Mode-1}, \\[4pt]
\mathrm{PromptEncoder}(\mathbf{I}_{\mathrm{click}}) + \mathbf{V}^{\mathrm{ref}}\mathbf{W}, 
& \text{refining Mode-2},
\end{cases}
\quad
\mathbf{F}_{\mathrm{fuse}} \in \mathbb{R}^{C \times hw}.
\end{equation}
When the model runs as a standalone interactive segmenter (Mode-3) or refines a semantic prediction (Mode-1), the click feature is added to the image feature directly. When refining an in-context prediction (Mode-2), it is instead added to the query-aligned reference feature $\mathbf{V}^{\mathrm{ref}}\mathbf{W}$, so that user clicks correct the reference-conditioned output rather than the raw image feature.

In parallel, each click is converted into a sparse object query. Following~\citep{guo2025verse}, a click at image coordinates $(x,y)$ is mapped to its feature-grid location $(x',y')=(\lfloor x/s \rfloor,\lfloor y/s \rfloor)$ with downsampling ratio $s$. We then average-pool features within a $(2r+1)\times(2r+1)$ window centered at $(x',y')$, apply a linear projection, and add a SAM-style positional embedding~\citep{kirillov2023segment}. Stacking the results over all clicks yields the click query set $\mathbf{Q}^{c}\in\mathbb{R}^{N_c \times C}$.

The shared decoder consumes the fused feature together with the appropriate sparse queries, where the queries depend on which mode is being refined:
\begin{equation}
\hat{\mathbf{M}} =
\begin{cases}
\mathrm{Decoder}(\mathbf{F}_{\mathrm{fuse}} \mid \mathbf{p}_n, \mathbf{Q}^{c}), 
& \text{refining Mode-1}, \\[4pt]
\mathrm{Decoder}(\mathbf{F}_{\mathrm{fuse}} \mid \mathbf{Q}^{s}, \mathbf{Q}^{c}), 
& \text{refining Mode-2}, \\[4pt]
\mathrm{Decoder}(\mathbf{F}_{\mathrm{fuse}} \mid \mathbf{Q}^{c}), 
& \text{Mode-3}.
\end{cases}
\end{equation}
In this way, interactive segmentation reuses exactly the same dense-fusion and sparse-query decoding interface as the other modes: it can stand on its own as a click-driven segmenter, or attach to Mode-1 and Mode-2 as a lightweight refinement step that injects user feedback without altering the backbone or decoder.

\subsubsection{Propagation-based volumetric segmentation}
Propagation is treated as an intra-volume special case of in-context segmentation. Rather than drawing support examples from an external annotated set, the reference set is built from previously segmented slices within the same 3D volume. Given a volume $\mathbf{V}=\{\mathbf{I}_z\}_{z=1}^{D}$ and a seed prediction $\hat{\mathbf{M}}_{z^*}$ initialized by any of the 2D modes (semantic, in-context, or interactive) or provided as ground truth, UniPro segments the remaining slices by repeatedly applying
\begin{equation}
    \hat{\mathbf{M}}_z = f_{\theta}(\mathbf{I}_z \mid \mathcal{R}_z),
\end{equation}
where $\mathcal{R}_z = \{(\mathbf{I}_{r}, \hat{\mathbf{M}}_{r})\}_{r \in \mathcal{N}(z)}$ denotes the intra-volume reference set formed by the seed slice together with nearby slices that have already been propagated.

Propagation therefore shares exactly the same reference-conditioned formulation as in-context segmentation, but with two distinguishing properties: the support examples are restricted to the current volume, and predictions are produced sequentially along the slice dimension. Notably, when the seed slice is itself initialized by the semantic class query $\mathbf{p}_n$, the entire pipeline becomes fully automatic, allowing UniPro to perform 3D volumetric segmentation without introducing any 3D-specific architecture. Further details on reference set construction and bidirectional inference are provided in the main text (Section~\ref{sec:propagation}).

\subsection{Model architectures}

\subsubsection{Shared slice-based image backbone.}
UniPro adopts a shared slice-based backbone to process both native 2D images and individual slices from 3D volumes within a unified framework. Given an input slice, the image encoder extracts a hierarchy of multi-scale visual features that serve as the common representation for all segmentation settings. Following standard medical segmentation practice, we use a lightweight UNet-style backbone~\citep{isensee2021nnu} and retain feature maps at three spatial resolutions, corresponding to $1/16$, $1/8$, and $1/4$ of the input size. These multi-scale features provide complementary semantic and spatial information, and are shared across semantic, in-context, interactive, and propagation-based segmentation.


\subsubsection{Reference mask and interaction encoding.}
To support reference-conditioned prediction, UniPro further encodes auxiliary conditioning inputs associated with the query slice. Specifically, reference masks are processed by a lightweight mask encoder that produces hierarchical features aligned with the image backbone at the same three resolutions ($1/4$, $1/8$, and $1/16$ of the input resolution). This design enables reference annotations to be fused with visual features in a spatially consistent manner. For interactive segmentation, user prompts are encoded with an analogous lightweight prompt encoder so that clicks or other interaction signals can be mapped into the same multi-scale feature space. By using aligned hierarchical encoders for reference masks and interaction prompts, UniPro preserves a unified conditioning interface across different segmentation settings.


\subsubsection{Decoder}
As shown in Figure~\ref{fig:decoder}, we use $L=6$ layers in total in the transformer decoder. Similar to Mask2Former~\citep{cheng2022masked}, we adopt a round-robin strategy for multi-scale interaction between image features and integrated queries. The blocks are scheduled in a cyclic order across scales (i.e., $1 \rightarrow 2 \rightarrow 3 \rightarrow 1 \rightarrow 2 \rightarrow 3$), ensuring balanced cross-scale information exchange throughout the decoding process. Finally, the decoded features are passed through a lightweight mask decoder that projects them back to the spatial resolution of the input, producing the final segmentation masks.

\begin{wrapfigure}{r}{0.48\textwidth}
\vspace{-10pt}
\centering
\includegraphics[width=0.95\linewidth]{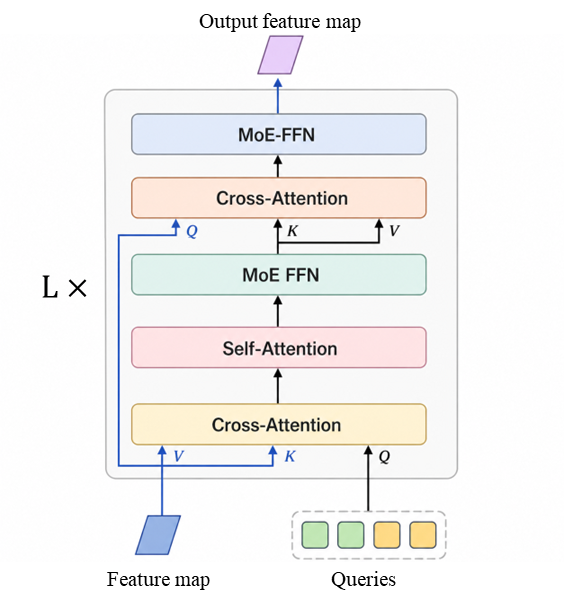}
\vspace{-10pt}
\caption{
Transformer decoder with MoE-FFN.
Queries attend to image features via cross-attention, followed by self-attention and MoE-FFN.
}
\label{fig:decoder}
\vspace{-10pt}
\end{wrapfigure}

\subsubsection{Mixture-of-Experts FFN Layers}

UniPro uses a shared transformer decoder across all segmentation modes. While the attention layers provide a common mechanism for exchanging information between query tokens and fused image features, different conditions, including semantic queries, reference-based object queries, and click queries, may require different nonlinear transformations. To improve mode adaptability without introducing separate decoders, we replace the standard feed-forward network (FFN) in the decoder with a Mixture-of-Experts FFN (MoE-FFN).

Each expert is implemented as an independent two-layer FFN:
\begin{equation}
    \mathrm{FFN}_m(\mathbf{x})
    =
    \mathbf{W}^{(2)}_m
    \sigma
    \left(
    \mathbf{W}^{(1)}_m \mathbf{x}
    \right),
    \quad m=1,\ldots,M,
\end{equation}
where $M$ is the number of experts, $\mathbf{W}^{(1)}_m$ and $\mathbf{W}^{(2)}_m$ are learnable linear projections of the $m$-th expert, and $\sigma(\cdot)$ denotes the activation function.

Given a query representation $\mathbf{Q}\in\mathbb{R}^{q\times C}$, a lightweight gating network predicts token-wise expert weights:
\begin{equation}
    \boldsymbol{\alpha}
    =
    \mathrm{softmax}
    \left(
    G(\mathbf{Q})
    \right),
    \quad
    \boldsymbol{\alpha}\in\mathbb{R}^{q\times M}.
\end{equation}
Each query token is processed by all experts, and their outputs are combined using the gating weights:
\begin{equation}
    \mathrm{MoE\text{-}FFN}(\mathbf{Q})_i
    =
    \sum_{m=1}^{M}
    \alpha_{i,m}
    \mathrm{FFN}_m(\mathbf{Q}_i),
    \quad
    i=1,\ldots,q.
\end{equation}
The output is then added to the input through a residual connection followed by layer normalization:
\begin{equation}
    \mathbf{Q}^{\prime}
    =
    \mathrm{LayerNorm}
    \left(
    \mathbf{Q}
    +
    \mathrm{MoE\text{-}FFN}(\mathbf{Q})
    \right).
\end{equation}

Only the FFN layers are expert-specialized, while the cross-attention and self-attention layers remain shared across all modes. This design keeps the decoder compact and unified, while allowing different query tokens to adaptively select suitable nonlinear transformations for heterogeneous conditioning signals.



\subsection{Implementation}
\label{appendix_training}
\subsubsection{Training details.}
We train UniPro with a two-stage training strategy. In the first stage, the model is trained for 100 epochs on single-slice segmentation tasks to learn robust mode-conditioned segmentation capability. Each training batch is randomly assigned to one of the three segmentation modes, with probabilities of 0.3, 0.3, and 0.4 for semantic segmentation, in-context segmentation, and interactive segmentation, respectively. This stage enables the shared backbone and decoder to learn a unified representation across different conditioning formats. In this stage, models are trained on 8 Quadro RTX 8000 GPUs using the AdamW optimizer with a base learning rate of $1\times10^{-4}$. We use a per-GPU batch size of 8, resulting in a total batch size of 64. A cosine annealing learning-rate scheduler with 10 warm-up epochs is adopted, where the minimum learning rate is scaled by $1\times10^{-5}$. During training, images are resized to $512\times512$, with data augmentations including random flipping, affine transformations, brightness and contrast adjustment, Gaussian blur, Gaussian noise, and grid distortion. The segmentation objective combines binary cross-entropy loss and Dice loss:
\begin{equation}
\mathcal{L}_{\mathrm{stage 1}} = \mathcal{L}_{\mathrm{2d}} = \mathcal{L}_{\mathrm{dice}} + \mathcal{L}_{\mathrm{ce}}.
\end{equation}

In the second stage, we fine-tune the entire model, including the image encoder, for 10 epochs with propagation-based segmentation. This adapts the shared representation to sequential prediction with multiple, potentially imperfect image--mask references. Since propagation training is performed on 3D datasets, directly fine-tuning on propagation batches may cause the model to forget the previously learned 2D segmentation capability and degrade its performance on native 2D datasets. To mitigate this issue, we interleave propagation training with regular single-slice training: each propagation batch is paired with a standard 2D training batch sampled with the same mode probabilities as in Stage 1. The propagation batch is constructed from an ordered slice queue, where the model predicts target slices conditioned on previous image-mask references. This stage enables the model to learn slice-wise mask propagation for volumetric segmentation while preserving its compatibility with native 2D inputs and the three original segmentation modes. 

In this stage, models are trained using the AdamW optimizer with a base learning rate of $5\times10^{-4}$. We use a per-GPU propagation batch size of 1. Propagation is performed on 3D chunks with a queue size of 8 slices. For each target slice, the reference window contains 5 image-mask pairs. When more than 5 previous predictions are available, we use a sliding reference window while always retaining the seed slice as an anchor. This anchor-pinned design provides a stable reference throughout propagation and helps reduce semantic drift across long slice sequences. For propagation training, data augmentation is applied consistently to the entire 3D chunk rather than independently to each slice. Specifically, all slices within the same queue undergo the same spatial transformations, ensuring that inter-slice anatomical correspondence is preserved during propagation learning.

The Stage-2 objective combines auxiliary 2D supervision and propagation-based volumetric losses:
\begin{equation}
\mathcal{L}_{\mathrm{stage2}} = \mathcal{L}_{2d} + \mathcal{L}_{\mathrm{prop}}
= \mathcal{L}_{2d} + 
\mathcal{L}_{\mathrm{fwd}}
+
\lambda_{\mathrm{bwd}}\mathcal{L}_{\mathrm{bwd}}
+
\lambda_{\mathrm{3d}}\mathcal{L}_{\mathrm{3d}} .
\end{equation}
Here, $\mathcal{L}_{2d}$ denotes the auxiliary single-slice supervision used to preserve 2D segmentation capability, while $\mathcal{L}_{\mathrm{fwd}}$ and $\mathcal{L}_{\mathrm{bwd}}$ correspond to forward and backward propagation losses. $\mathcal{L}_{\mathrm{3d}}$ is a volume-level consistency loss computed on stacked slice predictions. In our experiments, we set $\lambda_{\mathrm{bwd}}=0.7$ and $\lambda_{\mathrm{3d}}=5.0$.

\subsubsection{Inference details.}
During inference, each input image is resized so that its longer side equals 512 pixels while preserving the aspect ratio, and the predicted masks are remapped back to the original resolution before evaluation. For single-slice (2D) segmentation, UniPro directly predicts a mask for each input image using the chosen mode (semantic, in-context, or interactive), with the same per-mode forward pass as during training. For volumetric (3D) segmentation, we first select the middle slice of the target organ as the seed slice and initialize its mask via semantic, in-context, or interactive segmentation, depending on the evaluation setting. Starting from this seed, UniPro performs bidirectional propagation along the axial direction---forward to subsequent slices and backward to preceding slices---where each target slice is segmented using the anchor-pinned reference set containing the seed and nearby propagated slices. For the main evaluation, we compute Dice over slices that contain the target object. For 3D datasets, this corresponds to volumetric Dice computed over voxels within target-containing slices; for 2D datasets, Dice is computed on target-containing images/slices. This protocol is designed for a unified multi-mode evaluation setting. In interactive segmentation, user input is naturally defined on target-containing slices, since a positive click cannot be placed on an absent object and target-absent slices are typically skipped in practical annotation workflows. Applying the same target-present restriction across 2D and 3D datasets, and across semantic, in-context, and interactive modes, ensures that all modes are evaluated on the same foreground-relevant slice set. To further assess whether this foreground-region evaluation affects the conclusions, we additionally report full-volume Dice over all slices in Appendix~\ref{app:full-volume-vs-nonempty}. 


\subsection{Click encoding and simulation.}
User interactions are represented as binary disk maps with a radius of 1 pixel during both training and inference. To simulate corrective interactions, we follow the standard protocol of generating clicks from the discrepancy between the current prediction and the ground truth~\citep{sofiiuk2022reviving,liu2023simpleclick}. We place each new click at the centroid of the largest connected error region, so that the simulated clicks better reflect how users typically correct substantial local mistakes in medical images. We consider three click-based operating settings. In semantic or in-context initialization, the model first produces an initial segmentation and then performs two rounds of click-guided refinement. In pure interactive segmentation , the model applies three clicks sequentially, producing an updated mask after each interaction and thus yielding three intermediate refinement results.

For the single-click propagation comparison in
Table~\ref{tab1}, all methods receive the same initial
positive click on the same seed slice for each
volume--target pair. Each method generates its own
seed mask, whose quality is part of the end-to-end
click-to-volume evaluation. In multi-click evaluations,
the click-generation policy and budget are matched
within each setting, while corrective click locations
depend on each method's current prediction.

\subsection{Additional implementation details for baseline models}

All trainable baselines use the same dataset splits, annotations, and
preprocessing as UniPro, with matched training budgets and method-specific
input formats. Universal 2D baselines, including Hermes and Iris, are
jointly trained on the same multi-dataset training collection as UniPro.
In contrast, nnU-Net is trained separately for each dataset following its
standard self-configuring pipeline. Trainable propagation baselines use
the corresponding 3D training datasets. No additional medical training
data or annotations are introduced for UniPro.

Initialization is method-specific. For SAM2, we directly use the official
sam2.1\_hiera\_base\_plus checkpoint pretrained on large-scale natural
images, without any additional fine-tuning on medical data. SAM2 is
therefore excluded from the matched medical-training protocol above.
For MedSAM2, the encoder is initialized with MAE-pretrained ViT-Base
weights, after which the entire network is fine-tuned end-to-end on our
curated medical datasets. Other medical segmentation baselines, such as Hermes,
are trained from scratch on the same curated datasets. Released medical checkpoints are not used,
to avoid potential overlap with the evaluation data. Positive and negative
clicks are simulated consistently across methods using the policy in
Appendix~A.4.

\section{Theoretical Analysis of Bidirectional Supervision}
\label{app:theory}

We provide a simple idealized analysis to illustrate the intuition behind bidirectional supervision. This analysis is conditional on local stability assumptions and should be viewed as an explanatory perspective rather than a formal generalization guarantee.

\subsection{Setup and Notation}
Let $\mathbf{M}_0 \in \mathcal{Y}_0$ denote the ground-truth mask on the seed slice, represented as a probability map in $[0,1]^{H \times W}$; we use the Euclidean norm $\|\cdot\|$ throughout. Let $\mathcal{T}_t: \mathcal{Y}_t \to \mathcal{Y}_{t+1}$ be the true forward transfer between slice $t$ and $t{+}1$, so that $\mathbf{M}_{t+1} = \mathcal{T}_t(\mathbf{M}_t)$.

Our model is memory-based: a single reference-conditioned function $f(\mathbf{I}_q, \mathcal{R}_q)$ predicts the mask at a query slice $\mathbf{I}_q$ given a reference set $\mathcal{R}_q$ of previously segmented slices. Forward propagation at step $t$ corresponds to $P_t(\cdot) := f(\mathbf{I}_{t+1}, \mathcal{R}_t^{\mathrm{fwd}})$ with $\mathcal{R}_t^{\mathrm{fwd}}$ containing past forward predictions, while backward reconstruction at step $t$ is $Q_t(\cdot) := f(\mathbf{I}_t, \mathcal{R}_t^{\mathrm{bwd}})$ with $\mathcal{R}_t^{\mathrm{bwd}}$ containing future backward predictions. The operators $P_t$ and $Q_t$ use the same network $f$ but differ in query slice and reference-set content.

The predicted forward trajectory is $\hat{\mathbf{M}}_0 := \mathbf{M}_0$, $\hat{\mathbf{M}}_{t+1} := P_t(\hat{\mathbf{M}}_t)$ for $t = 0, \dots, n{-}1$. The backward reconstruction chain and its output are
\begin{equation}
Q^{(n:0)} := Q_0 \circ Q_1 \circ \cdots \circ Q_{n-1}, \qquad \tilde{\mathbf{M}}_0 := Q^{(n:0)}(\hat{\mathbf{M}}_n),
\end{equation}
and the bidirectional consistency loss is $\mathcal{L}_{\mathrm{bi}} := \|\tilde{\mathbf{M}}_0 - \mathbf{M}_0\|$.

\subsection{Assumptions}
\textbf{(A1) Bounded one-step forward error.} There exist $\delta_t \ge 0$ such that $\|P_t(\mathbf{M}_t) - \mathbf{M}_{t+1}\| \le \delta_t$ for all $t$.

\textbf{(A2) Lipschitz forward operator.} There exist $L_t \ge 0$ such that $\|P_t(\mathbf{a}) - P_t(\mathbf{b})\| \le L_t \|\mathbf{a} - \mathbf{b}\|$ for all $\mathbf{a}, \mathbf{b}$.

\textbf{(A3) Local non-collapse of the backward chain.} For intuition, assume that in a local neighborhood of the true endpoint $\mathbf{M}_n$, the backward chain is approximately non-collapsing and can reconstruct the seed mask up to error $\beta$. That is, there exist $\alpha > 0$ and $\beta \ge 0$ such that
\begin{align}
\|Q^{(n:0)}(\mathbf{z}) - Q^{(n:0)}(\mathbf{M}_n)\| &\ge \alpha \|\mathbf{z} - \mathbf{M}_n\|, \quad \forall \mathbf{z}, \tag{A3a} \\
\|Q^{(n:0)}(\mathbf{M}_n) - \mathbf{M}_0\| &\le \beta. \tag{A3b}
\end{align}
This assumption is not guaranteed globally for learned networks. It only formalizes the desirable regime in which backward reconstruction preserves endpoint discrepancies and penalizes forward trajectories that cannot be mapped back to the reliable seed slice.

\subsection{Main Result}

\begin{proposition}[Idealized endpoint-error bound]
\label{prop:bi-bound}
Under Assumption (A3),
\begin{equation}
\|\hat{\mathbf{M}}_n - \mathbf{M}_n\| \;\le\; \frac{\mathcal{L}_{\mathrm{bi}} + \beta}{\alpha}.
\label{eq:app-bi-bound}
\end{equation}
\end{proposition}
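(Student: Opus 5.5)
The argument is a single application of the lower-Lipschitz condition (A3a) followed by the triangle inequality and (A3b). We take $\mathbf{z} = \hat{\mathbf{M}}_n$ in (A3a). Since $Q^{(n:0)}(\hat{\mathbf{M}}_n) = \tilde{\mathbf{M}}_0$ by definition of the backward reconstruction, this gives
\begin{equation*}
\alpha\,\|\hat{\mathbf{M}}_n - \mathbf{M}_n\| \;\le\; \|\tilde{\mathbf{M}}_0 - Q^{(n:0)}(\mathbf{M}_n)\|.
\end{equation*}

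Next we insert the true seed mask $\mathbf{M}_0$ and apply the triangle inequality:
\begin{equation*}
\|\tilde{\mathbf{M}}_0 - Q^{(n:0)}(\mathbf{M}_n)\| \;\le\; \|\tilde{\mathbf{M}}_0 - \mathbf{M}_0\| + \|\mathbf{M}_0 - Q^{(n:0)}(\mathbf{M}_n)\|.
\end{equation*}
The first term is exactly $\mathcal{L}_{\mathrm{bi}}$ by definition, and (A3b) bounds the second term by $\beta$. Chaining the two displays gives $\alpha\|\hat{\mathbf{M}}_n - \mathbf{M}_n\| \le \mathcal{L}_{\mathrm{bi}} + \beta$. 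Dividing by $\alpha>0$ yields \eqref{eq:app-bi-bound}.

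There is no substantial obstacle. The main point of care is that (A3a) is stated for all $\mathbf{z}$, although the text describes it as holding in a local neighborhood of $\mathbf{M}_n$. If it is read only locally, we must additionally assume that $\hat{\mathbf{M}}_n$ lies in that neighborhood. One could try to justify this using (A1)–(A2), which give the standard forward bound $\|\hat{\mathbf{M}}_n - \mathbf{M}_n\| \le \sum_{t} \delta_t \prod_{s>t} L_s$. However, the proposition as stated uses only (A3), so the proof needs none of this: (A1)–(A2) play no role, and the bound follows directly from the global form of (A3a).
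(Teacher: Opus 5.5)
Your proof is correct and matches the paper's argument exactly: apply (A3a) at $\mathbf{z}=\hat{\mathbf{M}}_n$, insert $\mathbf{M}_0$ via the triangle inequality, bound the two terms by $\mathcal{L}_{\mathrm{bi}}$ (by definition) and by $\beta$ (from (A3b)), then divide by $\alpha>0$. Your caveat is fair: under a purely local reading of (A3a) you would also need $\hat{\mathbf{M}}_n$ to lie in the relevant neighborhood, and the paper avoids this by stating (A3a) for all $\mathbf{z}$.
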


\begin{proof}
Applying (A3a) with $\mathbf{z} = \hat{\mathbf{M}}_n$,
\begin{equation}
\alpha \|\hat{\mathbf{M}}_n - \mathbf{M}_n\| \;\le\; \|Q^{(n:0)}(\hat{\mathbf{M}}_n) - Q^{(n:0)}(\mathbf{M}_n)\|.
\end{equation}
By the triangle inequality,
\begin{align}
\|Q^{(n:0)}(\hat{\mathbf{M}}_n) - Q^{(n:0)}(\mathbf{M}_n)\|
&\le \|Q^{(n:0)}(\hat{\mathbf{M}}_n) - \mathbf{M}_0\| + \|Q^{(n:0)}(\mathbf{M}_n) - \mathbf{M}_0\| \\
&\le \mathcal{L}_{\mathrm{bi}} + \beta.
\end{align}
Dividing by $\alpha$ yields Eq.~\eqref{eq:app-bi-bound}.
\end{proof}

\begin{proposition}[One-step accumulation bound]
\label{prop:one-step}
Under (A1)--(A2) with uniform constants $L_t \le L$ and $\delta_t \le \delta$,
\begin{equation}
\|\hat{\mathbf{M}}_n - \mathbf{M}_n\| \;\le\; \delta \sum_{i=0}^{n-1} L^i.
\end{equation}
\end{proposition}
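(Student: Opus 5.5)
The plan is a standard error-recursion argument on the forward trajectory. I will track the error $e_t := \|\hat{\mathbf{M}}_t - \mathbf{M}_t\|$. Since $\hat{\mathbf{M}}_0 = \mathbf{M}_0$ by definition, the recursion starts from $e_0 = 0$.

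The key one-step inequality comes from inserting the true mask $\mathbf{M}_t$ into the forward operator. Since $\hat{\mathbf{M}}_{t+1} = P_t(\hat{\mathbf{M}}_t)$, the triangle inequality gives
\begin{equation*}
e_{t+1} \le \|P_t(\hat{\mathbf{M}}_t) - P_t(\mathbf{M}_t)\| + \|P_t(\mathbf{M}_t) - \mathbf{M}_{t+1}\|.
\end{equation*}
I bound the first term by $L_t e_t \le L e_t$ using (A2), and the second by $\delta_t \le \delta$ using (A1). This yields the linear recursion $e_{t+1} \le L e_t + \delta$.

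Next I unroll the recursion by induction on $n$, with claim $e_n \le \delta \sum_{i=0}^{n-1} L^i$. The base case $n=0$ is the empty sum, matching $e_0 = 0$. For the inductive step,
\begin{equation*}
e_{n+1} \le L\,\delta \sum_{i=0}^{n-1} L^i + \delta = \delta \sum_{i=0}^{n} L^i.
\end{equation*}
This step uses $L \ge 0$, so multiplying the inductive hypothesis by $L$ preserves the inequality.

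There is no substantive obstacle. The only care needed is the bookkeeping: the Lipschitz bound is applied at the predicted input $\hat{\mathbf{M}}_t$ versus the true input $\mathbf{M}_t$, and the reference sets inside $P_t$ are treated as fixed, as in the operator definition. I may also note that the result is the geometric sum $\delta (L^n - 1)/(L-1)$ when $L \neq 1$ and $n\delta$ when $L = 1$, which makes the exponential drift for $L > 1$ explicit. This contrasts with the drift-independent bound of Proposition~\ref{prop:bi-bound}.
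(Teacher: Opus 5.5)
Your proposal is correct and matches the paper's proof essentially step for step: the same error sequence $e_t$, the same triangle-inequality split through $P_t(\mathbf{M}_t)$ giving $e_{t+1}\le L_t e_t+\delta_t$, and the same unrolling from $e_0=0$. The only difference is that your explicit induction (noting $L\ge 0$) and the closed-form geometric sum spell out what the paper compresses into ``unrolling gives.''
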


\begin{proof}
Let $e_t := \|\hat{\mathbf{M}}_t - \mathbf{M}_t\|$. Then
\begin{equation}
e_{t+1} \le \|P_t(\hat{\mathbf{M}}_t) - P_t(\mathbf{M}_t)\| + \|P_t(\mathbf{M}_t) - \mathbf{M}_{t+1}\| \le L_t e_t + \delta_t.
\end{equation}
With $e_0 = 0$, unrolling gives $e_n \le \delta \sum_{i=0}^{n-1} L^i$.
\end{proof}

\begin{corollary}[Combined conditional bound]
Under (A1)--(A3),
\begin{equation}
\|\hat{\mathbf{M}}_n - \mathbf{M}_n\| \;\le\; \min\!\left\{ \delta \sum_{i=0}^{n-1} L^i, \; \frac{\mathcal{L}_{\mathrm{bi}} + \beta}{\alpha} \right\}.
\end{equation}
\end{corollary}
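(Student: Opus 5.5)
The corollary follows by combining Proposition~\ref{prop:bi-bound} and Proposition~\ref{prop:one-step}, which bound the same quantity $\|\hat{\mathbf{M}}_n - \mathbf{M}_n\|$. If a nonnegative number is at most $a$ and also at most $b$, it is at most $\min\{a,b\}$. So the proof needs only two steps: check that each proposition's hypotheses are implied by (A1)--(A3), and then take the minimum.

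\textbf{Forward bound.} I will invoke Proposition~\ref{prop:one-step}. Its hypotheses are (A1)--(A2) with uniform constants. I will read the corollary, as the statement intends, with $L := \sup_t L_t$ and $\delta := \sup_t \delta_t$ over $t = 0,\dots,n-1$. These suprema are finite maxima because only finitely many steps occur. This gives $\|\hat{\mathbf{M}}_n - \mathbf{M}_n\| \le \delta \sum_{i=0}^{n-1} L^i$.

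\textbf{Backward bound.} Proposition~\ref{prop:bi-bound} requires only (A3). Both (A3a), used with $\mathbf{z} = \hat{\mathbf{M}}_n$, and (A3b) are in force. Since $\mathcal{L}_{\mathrm{bi}} = \|Q^{(n:0)}(\hat{\mathbf{M}}_n) - \mathbf{M}_0\|$ by definition, we obtain $\|\hat{\mathbf{M}}_n - \mathbf{M}_n\| \le (\mathcal{L}_{\mathrm{bi}} + \beta)/\alpha$.

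Taking the minimum of the two upper bounds completes the argument. The only step needing care is the implicit quantifier in the statement. The corollary writes $L$ and $\delta$ without the word ``uniform'', so I would state explicitly that these are the uniform constants inherited from Proposition~\ref{prop:one-step}. I would also note that the forward trajectory is the same $\hat{\mathbf{M}}_n$ in both propositions, since both use $\hat{\mathbf{M}}_0 = \mathbf{M}_0$ and the same operators $P_t$. This shared trajectory is what makes it legitimate to compare the two bounds on the same quantity.
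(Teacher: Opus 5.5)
Your proposal is correct and matches the paper's intended argument. The paper gives no separate proof for this corollary, because it is just the conjunction of the endpoint-error bound (from (A3) alone) and the one-step accumulation bound (from (A1)--(A2) with uniform constants) on the same quantity $\|\hat{\mathbf{M}}_n - \mathbf{M}_n\|$, so the minimum follows. Your remark that $L$ and $\delta$ must be read as the uniform constants (e.g.\ $\max_{t<n} L_t$ and $\max_{t<n}\delta_t$) is a worthwhile clarification of a quantifier the statement leaves implicit.
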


\paragraph{Interpretation.}
This analysis should be understood as an idealized intuition rather than a strict guarantee. The first term, $\delta \sum_{i=0}^{n-1} L^i$, describes standard forward error accumulation, where local errors may grow along long propagation chains. The second term, $(\mathcal{L}_{\mathrm{bi}}+\beta)/\alpha$, suggests that, when the backward chain is locally stable and non-collapsing, a smaller bidirectional reconstruction loss is consistent with a smaller endpoint error. The key takeaway is that bidirectional supervision acts as a cycle-consistency regularizer: it penalizes forward trajectories whose accumulated errors prevent reconstruction back to the reliable seed slice. Thus, while it does not universally guarantee low propagation error, it provides an additional long-range constraint that helps explain the empirical reduction of drift in volumetric propagation.

\section{Dataset}
\subsection{Dataset Details}
\label{sec:appdataset}

We use a diverse collection of public medical image segmentation datasets covering 3D CT/MRI volumes and native 2D images across abdominal, cardiac, brain, skin, breast, lung, and endoscopic targets. 
Below we briefly describe each dataset used in training and evaluation.

\paragraph{AbdomenCT-1K~\citep{Ma-2021-AbdomenCT-1K}.} A large-scale abdominal CT benchmark with expert annotations for the liver, kidney, spleen, and pancreas, aggregated from multiple medical centers. To prevent severe class imbalance with other datasets in the training pool, we use only a subset of the released data, split into 352 training, 20 validation, and 76 test volumes.

\paragraph{Combined (CT-MR) Healthy Abdominal Organ Segmentation (CHAOS)~\citep{CHAOSdata2019, CHAOS2021}.} A multi-modal abdominal dataset from the ISBI 2019 CHAOS Challenge, comprising CT and MRI scans of healthy subjects with expert annotations for the liver, right kidney, left kidney, and spleen. We use only the MRI subset, split into 36 training, 9 validation, and 15 test volumes.

\paragraph{Multi-Centre, Multi-Vendor \& Multi-Disease Cardiac Image Segmentation Challenge (M\&Ms)~\citep{campello2021multi}.} A multi-centre, multi-vendor cardiac MRI dataset from the MICCAI 2020 Challenge, comprising short-axis cine scans of patients with cardiomyopathies and healthy controls. Expert annotations of the left ventricle, right ventricle, and left ventricular myocardium are provided at end-diastolic and end-systolic phases. We follow the official train/test split.

\paragraph{The Liver Tumor Segmentation Benchmark (LiTS)~\citep{bilic2023liver_lits}}. A multi-center benchmark of 201 contrast-enhanced abdominal CT scans (131 training, 70 testing) covering patients with diverse liver tumors, including hepatocellular carcinoma and metastases from colorectal, breast, and lung cancers, with expert annotations for liver and tumor regions. We use the 131 public cases with an 80\%/20\% split, retaining only the tumor labels.

\paragraph{The Kidney Tumor Segmentation 2019 dataset (KiTS) ~\citep{heller2019kits19}.} A dataset of CT scans from 300 patients who underwent nephrectomy for kidney tumors at the University of Minnesota Medical Center (2010--2018), of which 210 cases are publicly released. We split the public portion 80\%/20\% for training and testing, retaining only the tumor labels.

\paragraph{Dataset of Breast Ultrasound Images (Breast)~\citep{al2020dataset}.} A collection of 780 grayscale breast ultrasound images from 600 female patients (aged 25--75) at Baheya Hospital, Egypt, categorized as normal, benign, or malignant, with expert-annotated lesion masks. We use only the 647 benign and malignant cases with an 80\%/20\% train/test split.

\paragraph{Chest X-ray Masks and Labels (Chest)~\citep{jaeger2013automatic}.} A chest radiograph dataset jointly released by the National Library of Medicine (Bethesda, USA) and Shenzhen No.3 People's Hospital (China), covering a broad range of abnormalities such as effusions and miliary patterns. We use only the lung segmentation masks, with an 80\%/20\% train/test split.

\paragraph{BKAI-IGH NeoPolyp Dataset (BKAI)~\citep{ngoc2021neounet}.} A colonoscopy dataset of 1,200 images (1,000 white-light and 200 FICE) released by the BKAI Research Center and the Institute of Gastroenterology and Hepatology, Vietnam, with expert-verified segmentation masks distinguishing neoplastic from non-neoplastic polyps. We adopt an 80\%/20\% split on the official training set of 1,000 images.

\paragraph{International Skin Imaging Collaboration (ISIC)~\citep{tschandl2018ham10000, codella2019skin}.} A large-scale dermoscopic image collection from the ISIC Skin Lesion Analysis Challenges, with expert-annotated lesion boundary masks spanning diverse skin tones, lighting conditions, and lesion morphologies. We use the ISIC 2018 segmentation subset and follow the official train/validation/test split.

\paragraph{Beyond the Cranial Vault (BTCV)~\citep{landman2015miccai}.} An abdominal CT dataset with multi-organ annotations, used as an external 3D test set to evaluate cross-benchmark generalization. To align the label space with AbdomenCT-1K, we merge the left and right kidneys into a single class and retain only the organs present in AbdomenCT-1K (liver, kidney, spleen, and pancreas).

\paragraph{Automatic Cardiac Diagnosis Challenge (ACDC)~\citep{bernard2018acdc}.} A cardiac MRI dataset acquired at the University Hospital of Dijon with 1.5T and 3.0T Siemens scanners, providing short-axis cine sequences annotated for the same structures as M\&Ms at end-systolic and end-diastolic phases. For validation, we use 200 ED/ES phase volumes from the 100 patients in the official training set; for final evaluation, we use 100 phase volumes from the 50 patients in the official test set.

\paragraph{University of Waterloo Skin Cancer (UW-SC)~\citep{uwaterloo_skincancer}.} A collection of 167 dermoscopic skin lesion images with expert-verified segmentation masks, used solely as an external test set.

\paragraph{Breast Ultrasound Dataset B (BUS)~\citep{yap2017automated}.} A breast ultrasound dataset of 163 images with expert-annotated lesion regions. We merge benign and malignant lesions into a single class and use it solely as an external test set.

\paragraph{Brain Tumor Segmentation (BraTS)~\citep{baid2021rsna}.} A multi-modal brain tumor MRI dataset, used as an unseen 3D test set to evaluate generalization to an anatomical target and imaging domain absent from the training pool. We use only the T2-weighted modality. The original annotations contain three tumor sub-regions (enhancing tumor, tumor core, and peritumoral edema); to reduce the difficulty of this unseen-class evaluation, we merge them into a single tumor region label.

\subsection{Dataset statistics}

Table~\ref{tab:dataset} summarizes the datasets used to train and evaluate our framework. We draw from a diverse collection of publicly available medical image segmentation benchmarks covering a wide range of imaging modalities and anatomical regions. The training pool comprises nine datasets spanning abdominal CT/MRI (AbdomenCT-1K, CHAOS, LiTS, KiTS), cardiac MRI (M\&Ms), ultrasound (Breast Cancer), dermoscopy (ISIC), endoscopy (BKAI), and chest X-ray. These datasets differ substantially in scale—ranging from a few hundred to several thousand 3D volumes or 2D slices—as well as in annotation granularity, from single-organ masks to multi-organ delineations. To rigorously assess generalization, we further include held-out test sets covering both external domains (BTCV, ACDC, UW-SC, BUS), which share class definitions with the training data but originate from different cohorts or acquisition centers, and unseen classes (BraTS), which evaluate the framework's ability to segment anatomical structures absent from training. This split allows us to disentangle cross-center robustness from cross-class transferability.

\begin{table}[h!]
    \centering
    \scriptsize
    \caption{Datasets statistics.}
    \label{tab:dataset}
    \begin{tabular}{lcccccccc}
        \toprule
        \multirow{2}{*}{\textbf{Dataset}} & \multirow{2}{*}{\textbf{\# cls}} & \multirow{2}{*}{\textbf{Modality}} & \multicolumn{3}{c}{\textbf{3D Volumes}} & \multicolumn{3}{c}{\textbf{2-D Slices}}  \\
        \cmidrule(lr){4-6} \cmidrule(lr){7-9} 
        & & & \textbf{Train} & \textbf{Validate} & \textbf{Test} & \textbf{Train} & \textbf{Validate} & \textbf{Test} \\
        \midrule
        AbdomenCT-1K\citep{Ma-2021-AbdomenCT-1K}& 4 & CT & 352 & 20 & 76 & 39956 & 2557 & 8665 \\
        CHAOS\citep{CHAOSdata2019, CHAOS2021} & 4 & MRI & 36 & 9 & 15 & 4984 & 1309 & 1945 \\
        M\&Ms~\citep{campello2021multi} & 3 & MRI & 300 & - & 340 & 2475 & - & 2821 \\
        LiTS~\citep{bilic2023liver_lits} & 1 & CT & 104 & - & 27 & 5687 & - & 1625 \\
        KiTS~\citep{heller2019kits19} & 1 & CT & 168 & - & 42 & 5269 & - & 1609 \\
        Chest X-ray~\citep{al2020dataset, jaeger2013automatic} & 1 & X\_ray & - & - & - & 450 & - & 116\\
        Breast Cancer~\citep{al2020dataset} & 1 & Ultrasound & - & - & - & 518 & - & 129\\
        BKAI~\citep{ngoc2021neounet} & 2  & Endoscopy & - & - & - & 800 & - & 200  \\
        ISIC~\citep{tschandl2018ham10000, codella2019skin} & 1  & Dermoscopy & - & - & - & 2594 & 100 & 1000 \\
        \midrule
        BTCV~\citep{landman2015miccai} & 4 & CT & - & - & 30 & - & - & 3791 \\
        ACDC~\citep{bernard2018acdc} & 3  & MRI & - & 200 & 100 & - & 1841
        & 1001 \\
        UW-SC~\citep{uwaterloo_skincancer} & 1  & Dermoscopy & - & - & - & - & - & 167\\
        BUS~\citep{al2020dataset} & 1  & Ultrasound & - & - & - & - & - & 163\\
        \midrule
        BraTS~\citep{baid2021rsna} & 1 & MRI  & -- & -- & 74 & - & -& 4761\\
        \bottomrule
    \end{tabular}
\end{table}

\subsection{Data preprocessing}
\label{app:data_preprocessing}
Preprocessing is dataset-specific and retains both isotropic
and anisotropic volumes. AbdomenCT-1K, CHAOS, BTCV, and KiTS
are resampled to an isotropic spacing of
$1.5\times1.5\times1.5$\,mm. M\&Ms and ACDC are resampled
in-plane to $1.5\times1.5$\,mm while preserving their native
through-plane spacing of 5--10\,mm, yielding strongly
anisotropic volumes. LiTS uses
$0.7676\times0.7676\times1.0$\,mm spacing and is mildly
anisotropic, whereas BraTS uses isotropic
$1.0\times1.0\times1.0$\,mm spacing. These spacings refer
to the volumes before the final 2D input resizing.
For intensity preprocessing, CT volumes are clipped to
$[-991,500]$\,HU for AbdomenCT-1K and BTCV, and
$[-200,250]$\,HU for KiTS and LiTS. MRI volumes from CHAOS,
M\&Ms, ACDC, and BraTS are clipped to their volume-wise
0th--99th intensity percentiles. All volumes are then
rescaled to $[0,255]$ using volume-wise min--max normalization.
UniPro processes ordered slices without explicit voxel-spacing
or physical-distance inputs; isotropic spacing is therefore
not required, although large through-plane gaps may weaken
cross-slice correspondence.

\section{More Experiments}

\subsection{Effect of flexible interactive correction}
\label{appendix:interactive-vs-propagate}

We compare two interactive 3D segmentation paradigms under the same annotation budget: \emph{slice-wise interaction}, where the user corrects foreground slices independently, and \emph{click-and-propagate}, where clicks on selected slices are propagated to the rest of the volume. Both protocols use the same robot click generator, the same per-slice click cap, the same total click budget, and the same volume-level stopping criterion. Therefore, differences in annotation cost reflect the interaction protocol rather than the click oracle.

\paragraph{Robot click generation.}
Clicks are generated by a deterministic robot user. For each slice, the first click is sampled from the largest connected component of the foreground ground truth. Subsequent clicks are placed at the centroid of the largest connected component in the current 2D error map, computed between the thresholded prediction and the ground truth. A click is labeled positive if the selected error component is a false-negative region, and negative if it is a false-positive region. No human input is used at any stage.

\paragraph{Common budget and stopping rules.}
For each $(\text{volume}, \text{organ})$ pair, both protocols are given a total budget of $B=100$ clicks. Within a single slice, at most $C=5$ clicks can be used. A slice-level click loop stops early once the 2D Dice reaches target $T$. For the seed slice in the propagation protocol, we use a stricter threshold $\tau_{\text{2D}}=0.95$ to obtain a cleaner reference mask. The full $(\text{volume}, \text{organ})$ evaluation stops when either the volume-level 3D Dice reaches the dataset-specific target $T$, or the click budget $B$ is exhausted. We set $T=0.90$ for ACDC and $T=0.80$ for BraTS20.

\paragraph{Slice-wise interaction.}
This protocol represents conventional per-slice correction. The user first provides one positive click on every foreground slice, producing an initial stack of independent 2D predictions. The method then enters a refinement loop. At each step, we select the slice with the largest 2D error among slices that have not reached the per-slice click cap, and add clicks according to the robot click rule. The process repeats until the 3D Dice target $T$ is reached or the total budget is exhausted. Importantly, no information is shared across slices: each slice is segmented and refined independently.

\paragraph{Click-and-propagate.}
This protocol evaluates the proposed propagation-based annotation workflow. The seed slice is chosen as the median foreground slice. The robot user places up to $C=5$ clicks on the seed slice, with early stopping at $\tau_{\text{2D}}=0.95$. The resulting seed mask is then used as the initial reference to propagate predictions to the remaining slices. If the propagated volume does not reach the target 3D Dice, the protocol enters an iterative refinement loop. At each iteration, we select the unclicked slice with the largest 2D error, add up to $C=5$ clicks on that slice, and pin every clicked slice so that its corrected mask is not overwritten by later propagation. Propagation is then re-run using the top-$K=5$ pinned slices, ranked by their 2D Dice, as references. The loop continues until the same termination conditions are met: reaching the dataset-specific 3D Dice target $T$ or exhausting the $B=100$ click budget.

\paragraph{Datasets and evaluated pairs.}
We evaluate on the full test split of ACDC and BraTS. For ACDC, we evaluate $100$ volumes with three cardiac organs, resulting in $300$ $(\text{volume}, \text{organ})$ pairs, with target $T=0.90$. For BraTS20, we evaluate $74$ volumes with one tumor class, resulting in $74$ pairs, with target $T=0.80$. We use a lower target for BraTS because it is an unseen dataset. For each dataset, we report two annotation-cost metrics. The first is the \textbf{Number of Clicks (NoC)}, defined as the number of clicks required to first reach the target 3D Dice $T$, censored at $B=100$ if the target is not reached. This censoring makes the mean NoC conservative. The second is the \textbf{number of edited slices}, defined as the number of distinct slices that receive at least one click. NoC measures direct click effort, while the number of edited slices captures slice-level scrolling and inspection effort, which is an important practical cost in volumetric annotation. Both metrics are averaged across all evaluated pairs in each dataset.

\subsection{Full-Volume Dice versus Target-Containing-Slice Dice}
\label{app:full-volume-vs-nonempty}

Our main experiments report Dice on target-containing slices to maintain a unified evaluation protocol across 2D/3D data and semantic, in-context, and interactive modes. We additionally compute standard full-volume Dice over all slices. As shown in Table~\ref{tab:full_volume_vs_nonempty_avg}, the two metrics remain close across datasets, indicating that target-containing-slice evaluation does not substantially inflate performance in most cases. Larger gaps appear on KiTS and LiTS, where targets are smaller, sparser, or more prone to axial false positives; tumor cases are similarly more sensitive because small leakage into target-absent slices can noticeably reduce full-volume Dice. These results support using target-containing-slice Dice as a foreground-region metric for interactive evaluation, while reporting full-volume Dice as the standard volumetric metric.

\begin{table}[t]
\centering
\small
\setlength{\tabcolsep}{8pt}
\renewcommand{\arraystretch}{1.08}
\caption{
Comparison between standard full-volume Dice (\%) and target-containing-slice Dice.
Results are averaged over organs within each dataset when applicable.
$\Delta$ denotes target-containing-slice Dice minus full-volume Dice, reported in percentage points.
}
\label{tab:full_volume_vs_nonempty_avg}
\begin{tabular}{l c c c}
\toprule
Dataset & Full-vol Dice & Target-slice Dice & $\Delta$ \\
\midrule
Abd-1K & 86.11 & 88.62 & +2.51 \\
CHAOS  & 87.05 & 89.70 & +2.65 \\
M\&M   & 86.06 & 87.64 & +1.58 \\
LiTS   & 63.32 & 66.95 & +3.63 \\
KiTS   & 73.05 & 76.95 & +3.90 \\
BTCV   & 82.19 & 84.40 & +2.21 \\
ACDC   & 87.80 & 87.97 & +0.17 \\
\bottomrule
\end{tabular}
\end{table}

\subsection{Robustness to Seed-Slice Selection}
\label{app:seed_robustness}

Our default propagation protocol selects the seed slice as the median axial index among slices containing the target organ. This choice provides a simple and deterministic initialization. To examine whether the propagation performance depends critically on this seed location, we conduct a controlled robustness study on ACDC by selecting seed slices at different relative positions within the target-organ slice range. In this analysis, the selected seed slice is initialized with the ground-truth mask for all seed choices. This removes the confounding effect of seed-slice segmentation quality and isolates the effect of seed location on propagation. Specifically, we evaluate seeds at the $20\%$, $40\%$, $50\%$, $60\%$, and $80\%$ axial quantiles of the target-organ extent, as well as a deterministic random seed slice sampled independently for each volume. All remaining slices are then segmented by the same propagation procedure.

As shown in Table~\ref{tab:seed_position_gt}, UniPro is not highly sensitive to the exact seed-slice location when the seed mask is fixed to the ground truth. Across different seed choices, the mean Dice remains in a narrow range from $89.46$ to $91.03$. Seeds near the two ends of the target-organ extent, such as $q=0.2$ and $q=0.8$, show slightly larger drops, which is expected because boundary seeds provide less balanced context for covering the full organ range. In contrast, seeds around the middle of the organ extent remain stable, with $q=0.4$ even slightly outperforming the median seed. These results support using the median slice as a simple, deterministic, and standardized choice for evaluation, rather than an oracle condition on which the method critically depends.

\begin{table}[t]
\centering
\small
\setlength{\tabcolsep}{7pt}
\renewcommand{\arraystretch}{1.08}
\caption{
Seed-position robustness on ACDC under ground-truth seed-mask initialization.
GT-init removes seed-slice segmentation error and isolates the effect of seed location on propagation.
All results are 3D Dice on target-containing slices.
}
\label{tab:seed_position_gt}
\begin{tabular}{l c c c c c c}
\toprule
Seed slice & $q=0.2$ & $q=0.4$ & $q=0.5$ & $q=0.6$ & $q=0.8$ & Random \\
\midrule
Mean Dice & 89.81 & 91.03 & 90.98 & 90.53 & 89.46 & 90.12 \\
$\Delta$ vs. median & -1.17 & +0.05 & 0.00 & -0.45 & -1.52 & -0.86 \\
\bottomrule
\end{tabular}
\end{table}

\subsection{Statistical Analysis of Propagation Performance}
\label{app:propagation_statistics}

We assess the statistical reliability of the propagation
results by reporting variability across evaluated
volume--target pairs and paired comparisons with MedSAM2.
Table~\ref{tab:propagation_statistics}(a) summarises UniPro's
mean Dice, standard deviation, and 95\% confidence intervals
computed using Student's $t$-distribution. Paired $t$-tests
show significant improvements on five of the eight datasets
at the unadjusted $p<0.05$ level
(Table~\ref{tab:propagation_statistics}(b)), while differences
on CHAOS, LiTS, and KiTS are not statistically significant.
The wider confidence intervals on LiTS, KiTS, and BraTS
indicate greater uncertainty in these settings.
All statistics are based on a single training run per
method and describe test-pair variability rather than
variation across training runs.

\begin{table}[t]
\centering
\caption{Statistical analysis of propagation performance.
(a) UniPro's mean Dice (\%), SD, and 95\% CI.
(b) Paired comparisons with MedSAM2, where $\Delta$ denotes
the Dice difference in percentage points.
Bold $p$-values indicate significance at the unadjusted
$p<0.05$ level.}
\label{tab:propagation_statistics}
\small
\renewcommand{\arraystretch}{1.12}
\begin{minipage}[t]{0.55\linewidth}
\centering
\textbf{(a) UniPro performance}\par\smallskip
\setlength{\tabcolsep}{4pt}
\begin{tabular}{@{}lrrc@{}}
\toprule
Dataset & Mean & SD & 95\% CI \\
\midrule
Abd-1K & 88.62 & 12.64 & [87.19, 90.05] \\
CHAOS  & 89.70 & 9.05  & [87.37, 92.04] \\
M\&Ms  & 87.64 & 10.31 & [87.01, 88.28] \\
LiTS   & 66.95 & 28.60 & [55.15, 78.76] \\
KiTS   & 76.95 & 18.94 & [71.04, 82.85] \\
BTCV   & 84.40 & 14.70 & [81.32, 87.48] \\
ACDC   & 87.97 & 13.06 & [86.49, 89.45] \\
BraTS  & 54.89 & 23.03 & [49.56, 60.23] \\
\bottomrule
\end{tabular}
\end{minipage}\hfill
\begin{minipage}[t]{0.43\linewidth}
\centering
\textbf{(b) UniPro versus MedSAM2}\par\smallskip
\setlength{\tabcolsep}{5pt}
\begin{tabular}{@{}lrc@{}}
\toprule
Dataset & $\Delta$ & $p$-value \\
\midrule
Abd-1K & $+2.33$ & $\mathbf{0.0165}$ \\
CHAOS  & $-0.94$ & $0.5254$ \\
M\&Ms  & $+4.67$ & $\mathbf{5.49\times10^{-11}}$ \\
LiTS   & $+4.70$ & $0.3136$ \\
KiTS   & $+3.63$ & $0.1911$ \\
BTCV   & $+3.85$ & $\mathbf{0.0035}$ \\
ACDC   & $+3.79$ & $\mathbf{0.0066}$ \\
BraTS  & $+14.15$ & $\mathbf{1.14\times10^{-4}}$ \\
\bottomrule
\end{tabular}
\end{minipage}
\end{table}

We additionally compare UniPro with Sli2Vol+ and Vol2Flow on
AbdomenCT-1K liver propagation using the same seed slice and
ground-truth seed mask. UniPro achieves 94.03\% Dice, compared
with 86.05\% for Sli2Vol+ and 83.81\% for Vol2Flow
(Table~\ref{tab:additional_propagation}).

\begin{table}[t]
\centering
\caption{Liver propagation on AbdomenCT-1K with the same seed slice
and ground-truth seed mask for all methods. Dice scores are percentages.}
\label{tab:additional_propagation}
\small
\setlength{\tabcolsep}{12pt}
\renewcommand{\arraystretch}{1.12}
\begin{tabular}{@{}lc@{}}
\toprule
Method & Dice ($\uparrow$) \\
\midrule
Vol2Flow & 83.81 \\
Sli2Vol+ & 86.05 \\
UniPro (GT seed) & \textbf{94.03} \\
\bottomrule
\end{tabular}
\end{table}

\subsection{Inference Runtime and Memory}
\label{app:runtime_memory}

We compare UniPro with a 3D nnU-Net of similar parameter count on
AbdomenCT-1K test volumes using a single NVIDIA Quadro RTX 8000 GPU
and a batch size of 1. The 3D nnU-Net uses $128^3$ sliding-window
patches at 1.5\,mm isotropic spacing, while UniPro processes chunks
of eight 3-channel $512\times512$ slices. Table~\ref{tab:runtime_memory}
reports median per-volume inference time and peak GPU memory under
these native input configurations. UniPro uses approximately 46\% less peak memory but takes longer per
volume (28.0\,s versus 9.3\,s), reflecting a memory--runtime trade-off.
Sequential propagation limits parallelization because later predictions
depend on earlier outputs. These measurements compare the two native
deployment configurations and do not imply a general reduction in
computation or inference latency.

\begin{table}[t]
\centering
\caption{Inference runtime and memory on AbdomenCT-1K using one
NVIDIA Quadro RTX 8000 GPU and batch size 1. Time is the median
per volume; memory is peak GPU memory during inference.}
\label{tab:runtime_memory}
\small
\setlength{\tabcolsep}{6pt}
\renewcommand{\arraystretch}{1.12}
\begin{tabular}{@{}lcccc@{}}
\toprule
Model & Parameters & Native input & Time (s) & Memory (GB) \\
\midrule
3D nnU-Net & 31.84M & $128^3$ patch & 9.3 & 3.60 \\
UniPro & 32.84M & $8\times3\times512\times512$ chunk & 28.0 & 1.93 \\
\bottomrule
\end{tabular}
\end{table}

\subsection{Contributions of Stage-I Pretraining and Stage-II Propagation Training}
\label{app:training_stage_ablation}

To assess the contributions of the two training stages, we train all variants using only the cardiac subset of our training data and evaluate on ACDC. We compare Stage I only, Stage II only without Stage-I pretraining, and the full two-stage pipeline. Apart from the ablated stage, all other training and evaluation settings are kept identical across variants. At inference time, every variant receives the same seed slice and ground-truth seed mask for each volume--target pair, isolating propagation performance from differences in initialization quality. This cardiac-only ablation uses a different protocol from Table~\ref{tab:ablation_combined}(b), which evaluates 2D interleaving under multi-dataset training with semantic seed predictions.

\begin{table}[t]
\centering
\small
\setlength{\tabcolsep}{10pt}
\renewcommand{\arraystretch}{1.12}
\caption{Training-stage ablation using only cardiac training data. All variants are evaluated on ACDC with the same ground-truth seed masks. Dice scores are percentages.}
\label{tab:training_stage_ablation}
\begin{tabular}{@{}lc@{}}
\toprule
Training strategy & ACDC Dice (GT seed) \\
\midrule
Stage I only & 87.29 \\
Stage II only, without Stage-I pretraining & 88.12 \\
Full two-stage UniPro & \textbf{90.98} \\
\bottomrule
\end{tabular}
\end{table}

As shown in Table~\ref{tab:training_stage_ablation}, Stage I alone enables propagation, while Stage II alone can learn propagation without unified pretraining. Combining the two stages achieves the highest Dice, improving over Stage I only and Stage II only by 3.69 and 2.86 Dice points, respectively. These results support the benefit of combining unified pretraining with propagation-specific adaptation in this controlled cardiac setting.

\section{Visualization}
\subsection{Comparison with propagation baselines}
We further illustrate the behavior of UniPro on pancreas segmentation, one of the most challenging targets in abdominal imaging due to its small size, elongated and curved shape, low contrast against surrounding soft tissues, and high inter-patient variability in both location and morphology. As shown in Figure~\ref{fig:pancreas_qualitative}, all methods are initialized from a single positive click on the seed slice ($z=34$) and required to propagate the resulting mask through the entire volume. UniPro recovers a coherent pancreas across distant slices ($z=22$ to $z=48$) and reaches a 3D Dice of 0.834, while propagation baselines fail in characteristic ways: SAM2 (0.410) leaks heavily into adjacent organs and tissues; nnInteractive (0.445), despite being a dedicated 3D interactive model, loses the target on slices far from the seed; iSegFormer (0.503) and PAM (0.541) exhibit progressive boundary inflation as propagation proceeds; and MedSAM2 (0.665), the strongest medical propagation baseline, still under-segments the head and tail of the pancreas. The 3D reconstructions in the rightmost column further highlight that UniPro produces a continuous, anatomically plausible pancreas, whereas baselines show fragmented or distorted volumes. This case illustrates that the reference-conditioned propagation in UniPro remains stable on small, low-contrast, elongated structures where slice-by-slice drift typically dominates.

\begin{figure*}[t]
\centering
\includegraphics[width=\textwidth]{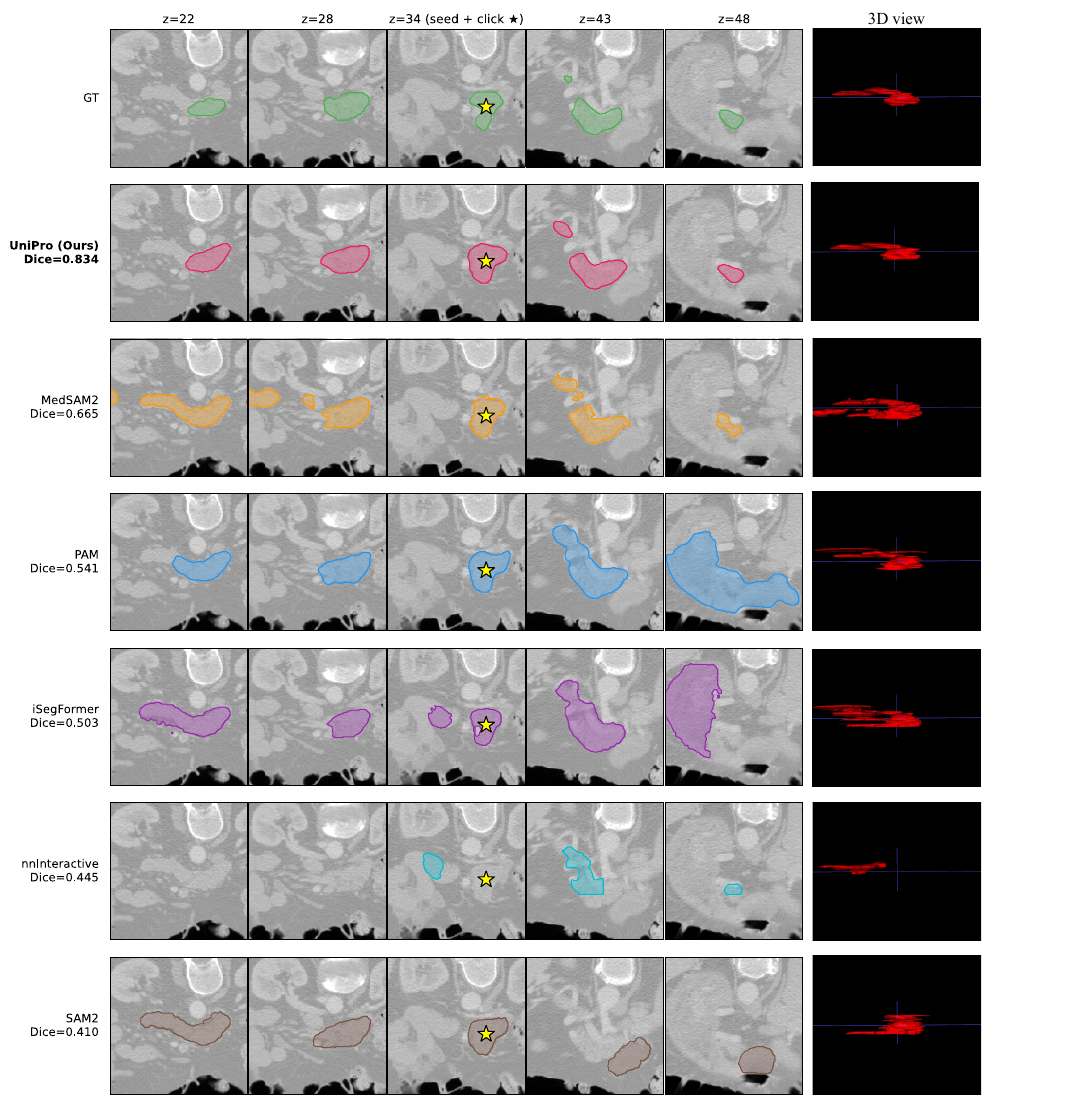}
\caption{
Qualitative comparison on pancreas segmentation under the single-click propagation setting. Each row shows a different method propagating from a single positive click ($\bigstar$) on the seed slice ($z=34$) to representative slices at $z=22, 28, 43, 48$, with the resulting 3D reconstruction shown in the rightmost column. UniPro produces masks that closely follow the ground-truth pancreas boundary across the entire volume, while baselines exhibit boundary leakage (PAM, iSegFormer), partial coverage (nnInteractive, MedSAM2), or large over-segmentation toward neighboring organs (SAM2). Per-volume 3D Dice is reported next to each method name.
}
\label{fig:pancreas_qualitative}
\end{figure*}

\subsection{Success and failure cases on tumor datasets}
\label{app:tumor_qualitative}

Figure~\ref{fig:qualitative_multicomponent} presents selected success and
failure cases on LiTS, KiTS, and BraTS. Each example shows the initialized
seed slice, predictions on four additional slices at increasing absolute
slice-index distances from the seed, and the corresponding 3D surfaces.
The successful LiTS and KiTS examples recover the dominant tumor component
across slices despite its irregular shape. The successful BraTS example
similarly captures the main lesion, although neither brain anatomy nor
brain tumors are included in training. These examples nevertheless retain
local errors, including missed small foci and false-positive extensions.

The failure cases highlight limitations on fragmented and heterogeneous
targets. In LiTS and KiTS, predictions cover part of the initialized region
but miss small, spatially separated components on more distant slices.
The BraTS failure captures only part of a heterogeneous lesion and omits
substantial surrounding and disconnected regions. Together, these
observations suggest that appearance consistency and cross-slice
traceability influence propagation beyond whether the anatomy was seen
during training. They motivate improved handling of newly appearing
components and weak correspondence between neighboring slices.

\begin{figure*}[p]
\centering
\includegraphics[width=\textwidth]{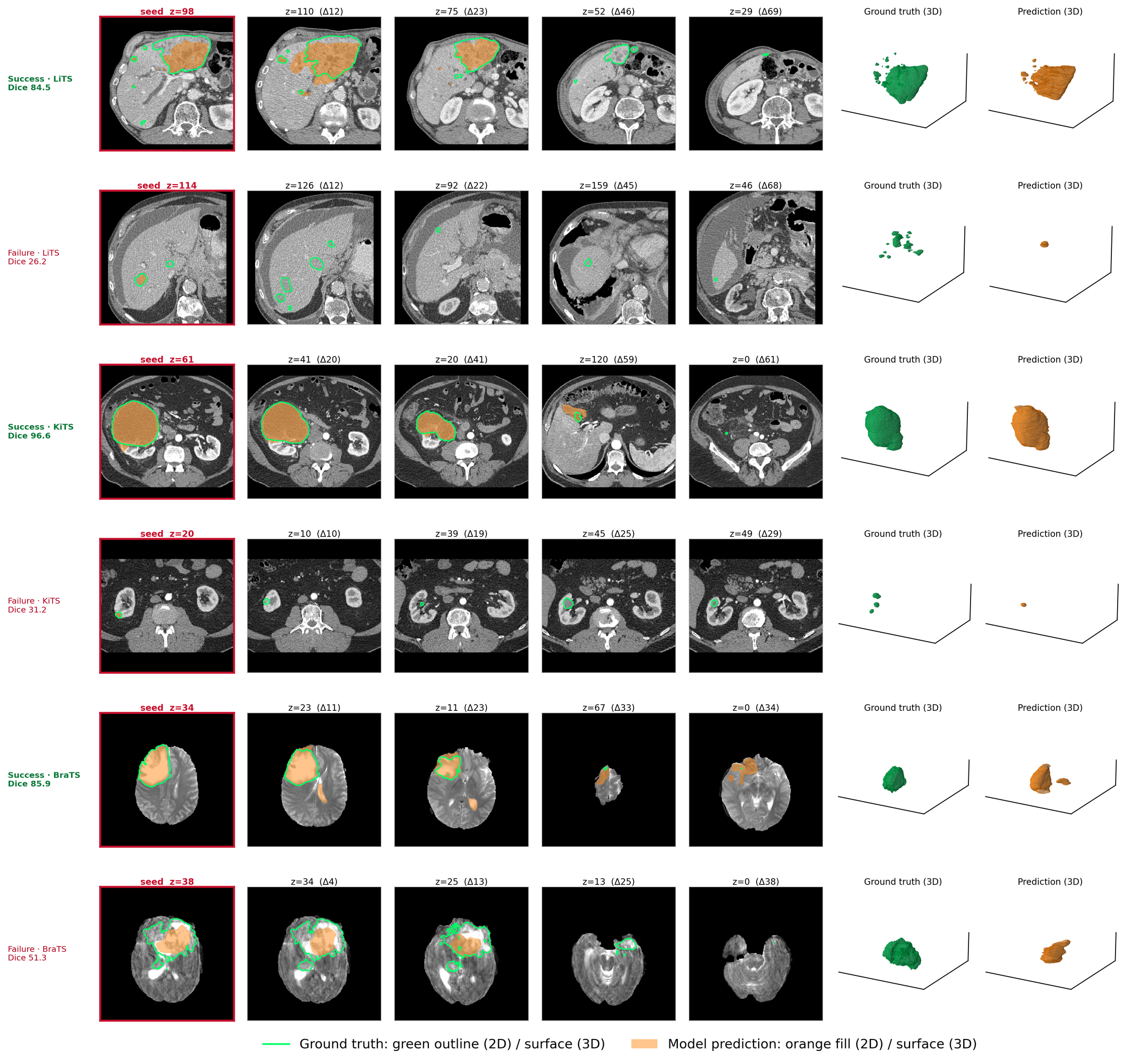}
\caption{Selected success and failure cases on LiTS, KiTS, and BraTS.
Each row shows the seed slice (red border), four additional slices ordered
by absolute distance from the seed, and 3D ground-truth and predicted
surfaces. Green contours and orange overlays denote ground truth and
predictions; 3D surfaces use the same colors. $z$ is the slice index and
$\Delta$ the absolute index distance in the preprocessed volume.}
\label{fig:qualitative_multicomponent}
\end{figure*}

\clearpage

\section{The Use of Large Language Models (LLMs)}
In this work, large language models (LLMs) were used as general-purpose assistants for code development, manuscript writing, and figure illustration. Specifically, LLMs aided in routine coding tasks (e.g., debugging, boilerplate generation, and refactoring), language polishing and formatting of the manuscript. They did not contribute to research ideation, experimental design, analysis, or result interpretation.


\clearpage

\section*{NeurIPS Paper Checklist}

\begin{enumerate}

\item {\bf Claims}
    \item[] Question: Do the main claims made in the abstract and introduction accurately reflect the paper's contributions and scope?
    \item[] Answer: \answerYes{} 

\item {\bf Limitations}
    \item[] Question: Does the paper discuss the limitations of the work performed by the authors?
    \item[] Answer: \answerYes{} 

\item {\bf Theory assumptions and proofs}
    \item[] Question: For each theoretical result, does the paper provide the full set of assumptions and a complete (and correct) proof?
    \item[] Answer: \answerYes{} 

    \item {\bf Experimental result reproducibility}
    \item[] Question: Does the paper fully disclose all the information needed to reproduce the main experimental results of the paper to the extent that it affects the main claims and/or conclusions of the paper (regardless of whether the code and data are provided or not)?
    \item[] Answer: \answerYes{} 

\item {\bf Open access to data and code}
    \item[] Question: Does the paper provide open access to the data and code, with sufficient instructions to faithfully reproduce the main experimental results, as described in supplemental material?
    \item[] Answer: \answerYes{} 

\item {\bf Experimental setting/details}
    \item[] Question: Does the paper specify all the training and test details (e.g., data splits, hyperparameters, how they were chosen, type of optimizer) necessary to understand the results?
    \item[] Answer: \answerYes{} 

\item {\bf Experiment statistical significance}
    \item[] Question: Does the paper report error bars suitably and correctly defined or other appropriate information about the statistical significance of the experiments?
    \item[] Answer: \answerYes{} 

\item {\bf Experiments compute resources}
    \item[] Question: For each experiment, does the paper provide sufficient information on the computer resources (type of compute workers, memory, time of execution) needed to reproduce the experiments?
    \item[] Answer: \answerYes{} 
    
\item {\bf Code of ethics}
    \item[] Question: Does the research conducted in the paper conform, in every respect, with the NeurIPS Code of Ethics \url{https://neurips.cc/public/EthicsGuidelines}?
    \item[] Answer: \answerYes{} 

\item {\bf Broader impacts}
    \item[] Question: Does the paper discuss both potential positive societal impacts and negative societal impacts of the work performed?
    \item[] Answer: \answerNA{} 

\item {\bf Safeguards}
    \item[] Question: Does the paper describe safeguards that have been put in place for responsible release of data or models that have a high risk for misuse (e.g., pre-trained language models, image generators, or scraped datasets)?
    \item[] Answer: \answerNA{} 

\item {\bf Licenses for existing assets}
    \item[] Question: Are the creators or original owners of assets (e.g., code, data, models), used in the paper, properly credited and are the license and terms of use explicitly mentioned and properly respected?
    \item[] Answer: \answerNA{} 

\item {\bf New assets}
    \item[] Question: Are new assets introduced in the paper well documented and is the documentation provided alongside the assets?
    \item[] Answer: \answerNA{} 

\item {\bf Crowdsourcing and research with human subjects}
    \item[] Question: For crowdsourcing experiments and research with human subjects, does the paper include the full text of instructions given to participants and screenshots, if applicable, as well as details about compensation (if any)? 
    \item[] Answer: \answerNA{} 

\item {\bf Institutional review board (IRB) approvals or equivalent for research with human subjects}
    \item[] Question: Does the paper describe potential risks incurred by study participants, whether such risks were disclosed to the subjects, and whether Institutional Review Board (IRB) approvals (or an equivalent approval/review based on the requirements of your country or institution) were obtained?
    \item[] Answer: \answerNA{} 

\item {\bf Declaration of LLM usage}
    \item[] Question: Does the paper describe the usage of LLMs if it is an important, original, or non-standard component of the core methods in this research? Note that if the LLM is used only for writing, editing, or formatting purposes and does \emph{not} impact the core methodology, scientific rigor, or originality of the research, declaration is not required.
    \item[] Answer: \answerYes{} 

\end{enumerate}

\end{document}